\title{Learning latent causal graphs via mixture oracles}
\author{Bohdan Kivva\\
bkivva@uchicago.edu\\
University of Chicago
\and Goutham Rajendran\\
goutham@uchicago.edu\\
University of Chicago
\and Pradeep Ravikumar\\
pradeepr@cs.cmu.edu\\
Carnegie Mellon University
\and Bryon Aragam\\
bryon@chicagobooth.edu\\
University of Chicago}
\newcommand{\RR}{\mathbb{R}}
\newcommand{\calF}{\mathcal{F}}
\newcommand{\Gam}{\Gamma}
\newcommand{\pr}{\mathbb{P}}
\newcommand{\R}{\mathbb{R}}
\newcommand{\given}{\,|\,}  %
\DeclareMathOperator{\Span}{span}
\DeclareMathOperator{\pa}{pa} %
\DeclareMathOperator{\ch}{ch} %
\newcommand\indep{\independent}
\newcommand\independent{\protect\mathpalette{\protect\independenT}{\perp}}
\def\independenT#1#2{\mathrel{\rlap{$#1#2$}\mkern4mu{#1#2}}}
\newtheorem{theorem}{Theorem}
\newtheorem{corollary}[theorem]{Corollary}
\newtheorem{example}[theorem]{Example}
\newtheorem{lemma}[theorem]{Lemma}
\newtheorem{definition}[theorem]{Definition}
\newtheorem{assumption}[theorem]{Assumption}
\newtheorem{observation}[theorem]{Observation}
\newtheorem{problem}[theorem]{Problem}
\newtheorem{remark}[theorem]{Remark}
\numberwithin{theorem}{section}
\newcommand{\gr}{G}
\newcommand{\ver}{V}
\newcommand{\edg}{E}
\newcommand{\obs}{X}
\newcommand{\lat}{H}
\newcommand{\nhbd}{S}
\newcommand{\mix}{Z}
\newcommand{\nobs}{n}
\newcommand{\nlat}{m}
\newcommand{\ncomp}{k}
\newcommand{\nmix}{K}
\newcommand{\bipartite}{\Gamma}
\newcommand{\latentgr}{\Lambda}
\newcommand{\adj}{A}
\newcommand{\Hdom}{\Omega}
\DeclareMathOperator{\comW}{Wsne}
\newcommand{\compproj}{L}
\newcommand{\jointtable}{J}
\DeclareMathOperator{\nbhd}{ne}
\DeclareMathOperator{\com}{sne}
\newcommand{\comp}[2]{C(#1,#2)}
\newcommand{\wgt}[2]{\pi(#1,#2)}
\newcommand{\mixcomp}[2]{C}
\newcommand{\mixwgt}[2]{\pi}
\DeclareMathOperator{\prob}{\mathbb{P}}
\newcommand{\oracle}{\mathsf{MixOracle}}
\newcommand{\camadd}[1]{{#1}}
\begin{document}

\maketitle

\begin{abstract}

We study the problem of reconstructing a causal graphical model from data in the presence of latent variables. The main problem of interest is recovering the causal structure over the latent variables while allowing for general, potentially nonlinear dependencies. In many practical problems, the dependence between raw observations (e.g. pixels in an image) is much less relevant than the dependence between certain high-level, latent features (e.g. concepts or objects), and this is the setting of interest. We provide conditions under which both the latent representations and the underlying latent causal model are identifiable by a reduction to a mixture oracle. \camadd{These results highlight an intriguing connection between the well-studied problem of learning the order of a mixture model and the problem of learning the bipartite structure between observables and unobservables.} The proof is constructive, and leads to several algorithms for explicitly reconstructing the full graphical model. We discuss efficient algorithms and provide experiments illustrating the algorithms in practice.
\end{abstract}

\section{Introduction}
\label{sec:intro}

Understanding causal relationships between objects and/or concepts is a core component of human reasoning, and by extension, a core component of artificial intelligence \cite{pearl1988,larranaga2011probabilistic}. 
Causal relationships are robust to perturbations, encode invariances in a system, and enable agents to reason effectively about the effects of their actions in an environment.
Broadly speaking, the problem of inferring causal relationships can be broken down into two main steps: 1) The extraction of high-level causal features from raw data, and 2) The inference of causal relationships between these high-level features. From here, one may consider estimating the magnitude of causal effects, the effect of interventions, reasoning about counterfactuals, etc. 
Our focus in this paper will be the problem of learning causal relationships \emph{between} latent variables, which is closely related to the problem of learning causal representations \citep{scholkopf2021toward}. This problem should be contrasted with the equally important problem of causal inference in the presence of latent confounders \citep[e.g.][]{colombo2012, spirtes2013causal, anandkumar2013, hoyer2006estimation, silva2006learning}; see also Remark~\ref{rem:objective}.

Causal graphical models \cite{pearl2009,pearl1988} provide a natural framework for this problem, and have long been used to model causal systems with hidden variables \cite{richardson2002ancestral,evans2016graphs,evans2014markovian,evans2018margins,evans2019smooth,richardson2017nested}.  
It is well-known that in general, without additional assumptions, a causal graphical model given by a directed acyclic graph (DAG) is not identifiable in the presence of latent variables \cite[e.g.,][]{pearl1988,spirtes2000}.
In fact, this is a generic property of nonparametric structural models: Without assumptions, identifiability is impossible, however, given enough structure, identifiability can be rescued. Examples of this phenomenon include linearity \cite{frot2017robust,chandrasekaran2012latent,anandkumar2013,xie2020generalized,anderson1984estimating}, independence \cite{allman2009,bonhomme2016latent,xie2020generalized}, rank \cite{frot2017robust,chandrasekaran2012latent}, sparsity \cite{anderson1984estimating}, and graphical constraints \cite{anandkumar2013,anandkumar2013learning}.

In this paper, we consider a general setting for this problem with \emph{discrete} latent variables, while allowing otherwise arbitrary (possibly nonlinear) dependencies. The latent causal graph between the latent variables is also allowed to be arbitrary: No assumptions are placed on the structure of this DAG.
We do not assume that the number of hidden variables, their state spaces, or their relationships are known; in fact, we provide explicit conditions under which all of this can be recovered uniquely. 
To accomplish this, we highlight a crucial reduction between the problem of learning a DAG model over these variables---given access only to the observed data---and learning the parameters of a finite mixture model. This observation leads to new identifiability conditions and algorithms for learning causal models with latent structure.

\paragraph{Overview}
Our starting point is a simple reduction of the graphical model recovery problem to three modular subproblems:
\begin{enumerate}
\item The bipartite graph $\Gamma$ between hidden and observed nodes,
\item The latent distribution $\pr(H)$ over the hidden variables $H$, and
\item A directed acyclic graph (DAG) $\Lambda$ over the latent distribution.
\end{enumerate}
From here, the crucial observation is to reduce the recovery problems for $\bipartite$ and $\pr(\lat)$ to the problem of learning a finite mixture over the observed data. The latter is a well-studied problem with many practical algorithms and theoretical guarantees. We do not require parametric assumptions on this mixture, which allows for very general dependencies between the observed and hidden variables. 
From this mixture model, we extract what is needed to learn the full graph structure.

This perspective leads to a systematic, modular approach for learning the latent causal graph via mixture oracles (see Section~\ref{sec:bg} for definitions).
Ultimately, the application of these ideas requires a practical implementation of this mixture oracle, which is discussed in Section~\ref{sec:alg}.

\paragraph{Contributions}
More precisely, we make the following contributions:
\begin{enumerate}
\item (Section~\ref{sec:exact}) We provide general conditions under which the latent causal model $\gr$ is identifiable (Theorem~\ref{thm:main}). Surprisingly, these conditions mostly amount to nondegeneracy conditions on the joint distribution. As we show, without these assumptions identifiability breaks down and reconstruction becomes impossible.
\item (Section~\ref{sec:bipartite}) We carefully analyze the problem of reconstructing $\bipartite$ under progressively weaker assumptions: First, we derive a brute-force algorithm that identifies $\bipartite$ in a general setting  (Theorem~\ref{thm:hidden-bipartite}), and then under a linear independence condition we derive a polynomial-time algorithm based on tensor decomposition and Jennrich’s algorithm  (Theorem~\ref{thm:recovery:eff}).
\item (Section~\ref{sec:Ph}) Building on top of the previous step, where we learn the bipartite graph and sizes of the domains of latent variables, we develop an efficient algorithm for learning the latent distribution $\pr(H)$ from observed data  (Theorem~\ref{thm:Ph}). 
\item (Section~\ref{sec:alg}-\ref{sec:expts}) We implement these algorithms as part of an end-to-end pipeline for learning the full causal graph and illustrate its performance on simulated data.
\end{enumerate}
A prevailing theme throughout is the fact that the hidden variables leave a recognizable ``signature'' in the observed data through the marginal mixture models induced over subsets of observed variables. By cleverly exploiting these signatures, the number of hidden variables, their states, and their relationships can be recovered exactly.

\paragraph{Previous work}
Latent variable graphical models have been extensively studied in the literature; as such we focus only on the most closely related work on causal graphical models here. 
\camadd{Early work on this problem includes seminal work by Martin and VanLehn \cite{martin1995discrete}, Friedman et al.~\cite{friedman1997learning} Elidan et al.~\cite{elidan2000discovering}.}
More recent work has focused on linear models \citep{anandkumar2013,frot2017robust,silva2006learning,xie2020generalized} or known structure \citep{kocaoglu2017causalgan,ding2021gans,shen2020disentangled}. 
When the structure is not known \emph{a priori}, we find ourselves in the realm of \emph{structure learning}, which is our focus.
Less is known regarding structure learning between latent variables for nonlinear models, although there has been recent progress based on nonlinear ICA
\citep{monti2020causal,khemakhem2020variational}. For example,
\citep{yang2020causalvae} proposed CausalVAE, which assumes a linear structural equation model and knowledge of the concept labels for the latent variables, in order to leverage the iVAE model from \citep{khemakhem2020variational}. By contrast, our results make no linearity assumptions and do not require these additional labels. 
\camadd{While this paper was under review, we were made aware of the recent work \cite{markham2020measurement} that studies a similar problem to ours in a general, nonlinear setting under faithfulness assumptions.}
It is also worth noting recent progress on learning discrete Boltzmann machines \citep{bresler2019learning,bresler2020learning}, which can be interpreted as an Ising model with a bipartite structure and a single hidden layer---in particular, there is no hidden causal structure. Nevertheless, this line of work shows that learning Boltzmann machines is computationally hard in a precise sense.
\camadd{More broadly, the problem of learning latent structure has been studied in a variety of other applications including latent Dirichlet allocation \cite{arora2012learning, arora2013practical}, phylogenetics \cite{mossel2005learning,semple2003phylogenetics}, and hidden Markov models \cite{anandkumar2012mixture,gassiat2013finite}.}

A prevailing theme in the causal inference literature has been negative results asserting that 
in the presence of latent variables, causal inference is impossible \cite{grimmer2020ive,robins2003,robins1999impossibility,damour2019multi}. Our results do not contradict this important line of work, and instead adopts a more optimistic tone: We show that under reasonable assumptions---essentially that the latent variables are discrete and well-separated---identifiability and exact recovery of latent causal relationships is indeed possible. 
This optimistic approach is implicit in recent progress on visual relationship detection \citep{newell2017pixels}, causal feature learning \citep{chalupka2017causal,lopez2017discovering}, and interaction modeling \citep{li2020causal,kipf2018neural}. In this spirit, our work provides theoretical grounding for some of these ideas.

\paragraph{Mixture models and clustering}
While our theoretical results in Sections~\ref{sec:exact}-\ref{sec:Ph} assume access to a mixture oracle (see Definition~\ref{defn:mixoracle}), in Section~\ref{sec:alg} we discuss how this oracle can be implemented in practice. To provide context for these results, we briefly mention related work on learning mixture models from data. Mixture models can be learned under a variety of parametric and nonparametric assumptions. Although much is known about parametric models \citep[e.g.][]{lindsay1995}, of more interest to us are nonparametric models in which the mixture components are allowed to be flexible, such as mixtures of product distributions \citep{hall2003,gordon2021hadamard}, grouped observations \citep{ritchie2020consistent,vandermeulen2016mixture} and general nonparametric mixtures \citep{aragam2018npmix,shi2009}. In each of these cases, a mixture oracle can be implemented without parametric assumptions. In practice, we use clustering algorithms such as $K$-means or hierarchical clustering to implement this oracle. We note also that the specific problem of consistently estimating the order of a mixture model, which will be of particular importance in the sequel, has been the subject of intense scrutiny in the statistics literature \citep[e.g.][]{manole2020estimating,koltchinskii2000empirical,dacunha1997estimation,chen2009order}.

\paragraph{Broader impacts and societal impact}
Latent variable models have numerous practical applications. Many of these applications positively address important social problems, however, these models can certainly be applied nefariously. For example, if the latent variables represent private, protected information, our results imply that this hidden private data can be leaked into publicly released data, which is obviously undesirable. Understanding how to infer unprotected data while safeguarding protected data is an important problem, and our results shed light on when this is and isn't possible.

\paragraph{Notation}
We say that a distribution $\pr(\ver)$ satisfies the \emph{Markov property} with respect to a DAG $\gr=(\ver,E)$ if
\begin{equation}
    \prob(V) = \prod_{v\in V} \prob(v\mid \pa_{G}(v)).
\end{equation}
An important consequence of the Markov property is that it allows one to read off conditional independence relations from the graph $\gr$. More specifically, we have the following \citep[see][for details]{pearl1988,spirtes2000}: 
\begin{itemize}
    \item For each $v\in\ver$, $v$ is independent of its non-descendants, given its parents. 
    \item For disjoint subsets $\ver_1,\ver_2,\ver_3\subset\ver$, if $\ver_1$ and $\ver_2$ are $d$-separated given $\ver_3$ in $\gr$, then $\ver_1\indep\ver_2\given\ver_3$ in $\pr(\ver)$.
\end{itemize}
The concept of $d$-separation (see \S3.3.1 in \citep{pearl1988} or \S2.3.4 in \citep{spirtes2000}) gives rise to a set of independence relations, often denoted by $\mathcal{I}(\gr)$. The Markov property thus implies that $\mathcal{I}(\gr)\subset\mathcal{I}(\ver)$, where $\mathcal{I}(\ver)$ is the collection of all valid conditional independence relations over $\ver$. When the reverse inclusion holds, we say that $\pr(\ver)$ is \emph{faithful} to $\gr$ (also that $\gr$ is a \emph{perfect map} of $\ver$). Although the concepts of faithfulness and $d$-separation will not be needed in the sequel, we have included this short discussion for completeness and context (cf. Section~\ref{sec:exact}).

Throughout this paper, we use standard notation such as $\pa(j)$ for parents, $\ch(j)$ for children, and $\nbhd(j)$ for neighbors.
Specifically, we define
\begin{itemize}
    \item The parents of a node $v\in\ver$ are denoted by $\pa(v)=\{u\in\ver: (u,v)\in E\}$;
    \item The children of a node $v\in\ver$ are denoted by $\ch(v)=\{u\in\ver: (v,u)\in E\}$;
    \item The neighborhood of a node $v\in\ver$ is denoted by $\nbhd(v)=\pa(v)\cup\ch(v)$.
\end{itemize}
Given a subset $\ver'\subset\ver$, $\pa(\ver'):=\cup_{j\in\ver'}\pa(j)$ and given a subgraph $\gr'\subset \gr$, $\pa_{\gr'}(\ver'):=\pa(\ver')\cap \gr'$, with similar notation for children and neighbors. We let $\adj\in\{0,1\}^{|\obs|\times |\lat|}$ denote the adjacency matrix of $\bipartite$ and denote its columns by $a_{j}\in\{0,1\}^{|\obs|}$.
Finally, we adopt the convention that $\lat$ is identified with the indices $[\nlat]=\{1,\ldots,\nlat\}$, and similar $\obs$ is identified with $[\nobs]=\{1,\ldots,\nobs\}$. In particular, we use $\pa(i)$ and $\pa(H_i)$ interchangeably when the context is clear.

\section{Background}
\label{sec:bg}

\begin{figure}[t]
\centering
\begin{subfigure}[b]{0.2\textwidth}
\begin{tikzpicture}[node distance={15mm}, 
                    latent/.style = {draw, circle, orange},
                    obs/.style = {draw, circle, blue}
                    ] 
\node[latent] (Z1) {$H_1$}; 
\node[obs] (X1) [below left of=Z1] {$X_1$}; 
\node[obs] (X2) [below right of=Z1] {$X_2$};

\draw[->, blue, dashed] (Z1) -- (X1);
\draw[->, blue, dashed] (Z1) -- (X2);
\end{tikzpicture} 
\caption{A single hidden state.}
\end{subfigure}
~
\begin{subfigure}[b]{0.3\textwidth}
\centering
\begin{tikzpicture}[node distance={15mm}, 
                    latent/.style = {draw, circle, orange},
                    obs/.style = {draw, circle, blue}
                    ] 
\node[latent] (Z1) {$H_1$}; 
\node[latent] (Z2) [right of=Z1] {$H_2$};
\node[obs] (X1) [below of=Z1] {$X_1$}; 
\node[obs] (X2) [below of=Z2] {$X_2$};


\draw[->, blue, dashed] (Z1) -- (X1);
\draw[->, blue, dashed] (Z2) -- (X2);
\end{tikzpicture} 
\caption{Two independent hidden states.}
\end{subfigure}
~
\begin{subfigure}[b]{0.2\textwidth}
\centering
\begin{tikzpicture}[node distance={15mm}, 
                    latent/.style = {draw, circle, orange},
                    obs/.style = {draw, circle, blue}
                    ] 
\node[latent] (Z1) {$H_1$}; 
\node[latent] (Z2) [right of=Z1] {$H_2$};
\node[latent] (Z3) [right of=Z2] {$H_3$};
\node[obs] (X1) [below of=Z1] {$X_1$}; 
\node[obs] (X2) [below of=Z2] {$X_2$};
\node[obs] (X3) [below of=Z3] {$X_3$};

\draw[->, orange] (Z1) -- (Z2);
\draw[->, orange] (Z3) -- (Z2);
\draw[->, orange] (Z3) to [out=140,in=40] (Z1);

\draw[->, blue, dashed] (Z1) -- (X1);
\draw[->, blue, dashed] (Z2) -- (X2);
\draw[->, blue, dashed] (Z2) -- (X3);
\draw[->, blue, dashed] (Z3) -- (X3);
\end{tikzpicture} 
\caption{Three dependent hidden states.}
\end{subfigure}
\caption{Illustration of the basic model. Note that there are no edges between observed variables or edges oriented from observed to hidden. (a) A latent variable model with a single hidden state; i.e. a mixture model. (b)-(c) Two examples of latent variable models with more complicated hidden structure. }
\label{fig:model}
\end{figure}
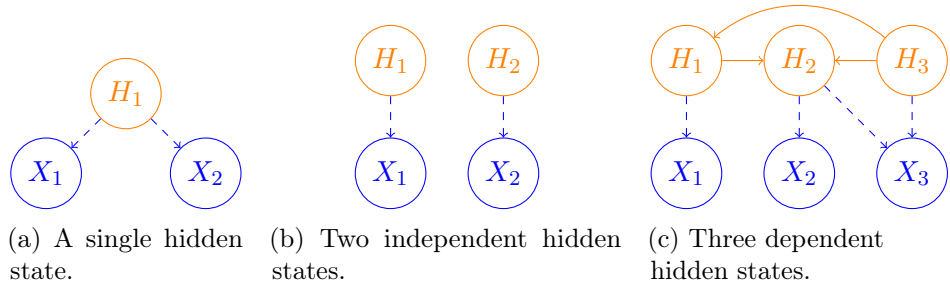

Let $\gr=(\ver,\edg)$ be a DAG with $\ver=(\obs,\lat)$, where $\obs\in\R^{\nobs}$ denotes the observed part and $\lat\in\Hdom:=\Hdom_{1}\times\cdots\times\Hdom_{\nlat}$ denotes the hidden, or latent, part. Throughout this paper, we assume that each $\Hdom_{i}$ is a discrete space with $|\Hdom_{i}|\ge 2$. 
We assume further that there are no edges between observed variables and no edges from observed to hidden variables, and that the distribution of $\ver$ satisfies the Markov property with respect to $\gr$ (see the supplement for definitions).
Under these assumptions, $\gr$ decomposes as the union of two subgraphs $\gr=\bipartite\cup\latentgr$, where $\bipartite$ is a directed, bipartite graph of edges pointing from $\lat$ to $\obs$, and $\latentgr$ is a DAG over the latent variables $\lat$.
Similar assumptions have appeared in previous work \citep{anandkumar2013,xie2020generalized,markham2020measurement}, and although nontrivial, they encapsulate our keen interest in reconstructing the structure $\latentgr$ amongst the latent variables, and captures relevant applications where the relationships between raw observations is less relevant than so-called ``causal features'' \citep{chalupka2014visual,chalupka2017causal}.
See Figure~\ref{fig:model} for examples.

Throughout this paper, we use standard notation such as $\pa(j)$ for parents, $\ch(j)$ for children, and $\nbhd(j)$ for neighbors. Given a subset $\ver'\subset\ver$, $\pa(\ver'):=\cup_{j\in\ver'}\pa(j)$ and given a subgraph $\gr'\subset \gr$, $\pa_{\gr'}(\ver'):=\pa(\ver')\cap \gr'$, with similar notation for children and neighbors. We let $\adj\in\{0,1\}^{|\obs|\times |\lat|}$ denote the adjacency matrix of $\bipartite$ and denote its columns by $a_{j}\in\{0,1\}^{|\obs|}$.

\begin{remark}
\label{rem:objective}
\camadd{Our goal is to learn the hidden variables $\lat$ and the causal graph between them, defined above by $\latentgr$. To accomplish this, our main result (Theorem~\ref{thm:main}) shows how to identify $(\bipartite,\pr(\lat))$, from which $\latentgr$ can be recovered (see Section~\ref{sec:exact} for details). It is important to contrast this problem with problems involving latent confounders \citep[e.g.][]{colombo2012,spirtes2013causal,anandkumar2013,hoyer2006estimation,silva2006learning}, where the goal is to learn the causal graph between the observed variables $\obs$. In our setting, there are no edges between the observed variables.}
\end{remark}

\subsection{Assumptions}
It is well-known that without additional assumptions, the latent variables $\lat$ cannot be identified from $\obs$, let alone the DAG $\latentgr$.
For example, we can always replace a pair of distinct hidden variables $\lat_{i}$ and $\lat_{j}$ with a single hidden variable $\lat_{0}$ that takes values in $\Hdom_{i}\times \Hdom_{j}$. Similarly, a single latent variable can be split into two or more latent variables. In order to avoid this type of degeneracy, we make the following  assumptions:

\begin{assumption}[No twins]
\label{assm:twins}
For any hidden variables $\lat_{i}\neq \lat_{j}$ we have $\nbhd_{\Gamma}(H_i)\neq\nbhd_{\Gamma}(H_j)$.
\end{assumption}

\begin{assumption}[{\camadd{Maximality}}]
\label{assm:minimal}
There is no DAG $\gr' = ((X, H'), E')$ such that: 
\begin{enumerate}
    \item $\pr(X, H')$ is Markov with respect to $\gr'$;
    \item \camadd{$\gr'$ is obtained from $\gr$ by splitting a hidden variable (equivalently, $\gr$ is obtained from $\gr'$ by merging a pair of vertices)};
    \item $G'$ satisfies Assumption~\ref{assm:twins}.
\end{enumerate}   
\end{assumption}

These assumptions are necessary for the recovery of $\latentgr$ in the sense that, without these assumptions, latent variables can be created or destroyed without changing the observed distribution $\pr(\obs)$. \camadd{Informally, the maximality assumption says that if there are several DAGs that are Markov with respect to the given distribution, we are interested in recovering the most informative among them.} 
Finally, we make a mild assumption on the probabilities, in order to avoid degenerate cases where certain configurations of the latent variables have zero probability:
\begin{assumption}[Nondegeneracy]
\label{assump:strong}
The distribution over $\ver=(\obs,\lat)$ satisfies:
\begin{enumerate}[label=(\alph*)] 
\item\label{assump:strong:NZ} $\prob(H = h)>0$ for all $h\in \Hdom_1 \times \ldots \times \Hdom_k$.
\item\label{assump:strong:SDC} For all $\nhbd\subset\obs$ and $a\ne b$, $\prob(\nhbd|\pa(\nhbd) = a)\ne\prob(\nhbd|\pa(\nhbd) = b)$, where $a$ and $b$ are distinct configurations of $\pa(\nhbd)$.
\end{enumerate}
\end{assumption}
Without this nondegeneracy condition, $\lat$ cannot be identified; see Appendix~\ref{app:example-nonidentifiable} for details.

\subsection{Mixture oracles}
Let $\nhbd\subset\obs$ be a subset of the observed variables. We can always write the marginal distribution $\pr(\nhbd)$ as
\begin{align}
\label{eq:hidden:mix}
\pr(\nhbd)
= \sum_{h\in\Hdom}\pr(\lat=h)\pr(\nhbd\given\lat=h).
\end{align}
When $\nhbd=\obs$, this can be interpreted as a mixture model with $\nmix:=|\Hdom|$ components. When $\nhbd\subsetneq\obs$, however, multiple components can ``collapse'' onto the same component, resulting in a mixture with fewer than $\nmix$ components. Let $\ncomp(\nhbd)$ denote this number, so that we may define a discrete random variable $\mix$ with $\ncomp(\nhbd)$ states such that for all $j\in[\ncomp(\nhbd)]$, we have 
\begin{align}
\pr(\nhbd)
= \sum_{j=1}^{\ncomp(\nhbd)}\underbrace{\pr(\mix=j)}_{:=\wgt{\nhbd}{j}}\underbrace{\pr(\nhbd\given\mix=j)}_{:=\comp{\nhbd}{j}}
= \sum_{j=1}^{\ncomp(\nhbd)} \wgt{\nhbd}{j}\comp{\nhbd}{j}.
\end{align}
Then $\wgt{\nhbd}{j}$ is the weight of the $j$th mixture component over $\nhbd$, and $\comp{\nhbd}{j}$ is the corresponding $j$th component.
It turns out that these probabilities precisely encode the conditional independence structure of $\lat$. To make this formal, we define the following oracle:
\begin{definition}
\label{defn:mixoracle}
A \emph{mixture oracle} is an oracle that takes $\nhbd\subset\obs$ as input and returns the number of components $\ncomp(\nhbd)$ as well as the weights $\wgt{\nhbd}{j}$ and components $\comp{\nhbd}{j}$ for each $j\in[\ncomp(\nhbd)]$. This oracle will be denoted by $\oracle(\nhbd)$.
\end{definition}

Although our theoretical results are couched in the language of this oracle, we provide practical implementation details in Section~\ref{sec:alg} and experiments to validate our approach in Section~\ref{sec:expts}.

A sufficient condition for the existence of a mixture oracle is that the mixture model over $\obs$ is identifiable. This is because identifiability implies that the number of components $\nmix$, the weights $\pr(\mix=j)$, and the mixture components $\pr(\obs\given\mix=j)$ are determined by $\pr(\obs)$. The marginal weights $\wgt{\nhbd}{j}$ and components $\comp{\nhbd}{j}$ can then be recovered by simply projecting the full mixture over $\obs$ onto $\nhbd$.

\begin{remark}
In fact, we do not need the full power of $\oracle$. For our algorithms it is sufficient to have access to $\ncomp(\nhbd)$ for a sufficiently large family of $S\subset X$, the list of weights $\wgt{X}{j}$, and a map that relates components in the full mixture over $X$ to the components in the marginal mixtures over each variable $X_i$ (see Section~\ref{sec:Ph} for details).
\end{remark}

Before concluding this section, we note an important consequence of Assumption~\ref{assump:strong} that will be used in the sequel:

\begin{observation}\label{obs:counting} Under Assumption~\ref{assump:strong}, for any $\nhbd\subseteq\obs$ 
\[k(S) = \prod\limits_{\lat_{i}\in \pa(\nhbd)} \dim(\lat_{i}) =: \dim(\pa(\nhbd)) .\]
\end{observation}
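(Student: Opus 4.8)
The plan is to set up a bijection between the distinct components of the mixture~\eqref{eq:hidden:mix} and the configurations of $\pa(\nhbd)$, so that counting the latter yields $\ncomp(\nhbd)$. Write $P := \pa(\nhbd)\subseteq\lat$ and group the terms of~\eqref{eq:hidden:mix} according to the value $a$ that $h$ assigns to the coordinates in $P$, which gives
\[
\pr(\nhbd) = \sum_{a} \pr(P = a)\,\pr(\nhbd \given P = a),
\]
where $a$ ranges over the $\dim(\pa(\nhbd)) = \prod_{\lat_i\in P}\dim(\lat_i)$ configurations of $P$. It therefore suffices to show that each term above is a genuine, non-collapsing component: namely, that the weights $\pr(P = a)$ are strictly positive and that the components $\pr(\nhbd\given P = a)$ are pairwise distinct across $a$.

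First I would verify that $\pr(\nhbd\given\lat = h)$ depends on $h$ only through its restriction $h_P$, which is what makes the regrouping above valid and simultaneously shows that no configuration of $P$ can be split into two components (so $\ncomp(\nhbd)\le\dim(\pa(\nhbd))$). This is a direct consequence of the Markov factorization: since there are no edges among the observed variables and $\pa(X_j)\subseteq\lat$ for every $j$ (the observed variables are sinks), conditioning on $\lat = h$ renders the variables in $\obs$ mutually independent, whence $\pr(\nhbd\given\lat=h) = \prod_{j\in\nhbd}\pr(X_j\given\pa(X_j) = h_{\pa(X_j)})$. The right-hand side involves only the coordinates of $h$ lying in $\bigcup_{j\in\nhbd}\pa(X_j) = P$, as claimed.

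Next I would rule out vanishing weights and collapsing using Assumption~\ref{assump:strong}. Positivity of the weights is immediate from part~\ref{assump:strong:NZ}: marginalizing $\pr(\lat=h)>0$ over the coordinates outside $P$ gives $\pr(P=a) = \sum_{h:\,h_P = a}\pr(\lat=h) > 0$ for every configuration $a$. Distinctness of the components for $a\ne b$ is precisely the statement of part~\ref{assump:strong:SDC} applied to $\nhbd$. Together these give the matching lower bound $\ncomp(\nhbd)\ge\dim(\pa(\nhbd))$, and combined with the upper bound from the previous step the equality $\ncomp(\nhbd)=\dim(\pa(\nhbd))$ follows.

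The argument is largely bookkeeping, so I do not anticipate a serious obstacle; the one place that requires care is the claim that $\pr(\nhbd\given\lat=h)$ factorizes and hence depends only on $h_P$. This uses both structural assumptions on $\gr$ (no observed–observed edges, and $\obs$ consists of sinks), and one must check that marginalizing the product factorization down to $\nhbd$ genuinely eliminates dependence on the coordinates of $h$ outside $P$ rather than merely reorganizing it; the mutual conditional independence of $\obs$ given $\lat$ is exactly what guarantees this.
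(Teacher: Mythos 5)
Your proof is correct and follows essentially the same route as the paper's: both reduce the count to the configurations of $\pa(\nhbd)$ by noting that the Markov property makes $\pr(\nhbd\given\lat=h)$ depend only on $h_{\pa(\nhbd)}$, and then invoke Assumption~\ref{assump:strong} to ensure the resulting components neither vanish nor collapse. You merely spell out the factorization step and the positivity of the weights more explicitly than the paper does.
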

\begin{proof}
By the Markov property, $S$ is independent of $H\setminus \pa(S)$. There are $\dim(\pa(S))$ possible assignments to the hidden variables in $\pa(S)$ and by Assumption~\ref{assump:strong}, distinct assignments to the hidden variables induce distinct components in the marginal distribution $P(S)$. Hence, by definition, $k(S) = \dim(\pa(S))$.
\end{proof}

\section{Recovery of the latent causal graph}
\label{sec:exact}

We first consider the oracle setting in which we have access to $\oracle(S)$.

Observe that the problem of learning $\gr$ can be reduced to learning $(\bipartite,\pr(\lat))$:
Since we can decompose $\gr$ into a bipartite subgraph $\bipartite$ and a latent subgraph $\latentgr$, it suffices to learn these two components separately. We then further reduce the problem of learning $\latentgr$ to learning the latent distribution $\pr(\lat)$.
First, we will show how to reconstruct $\bipartite$ from $\oracle(S)$. Then, we will show how to learn the latent distribution $\pr(\lat)$ from $\oracle(S)$. 

Thus, the problem of learning $\gr$ is reduced to the mixture oracle:
\begin{align*}
    \gr 
    \to (\bipartite,\pr(\lat))
    \to \oracle(S).
\end{align*}

In the sequel, we focus our attention on recovering $(\bipartite,\pr(\lat))$. In order to recover $\pr(\lat)$, we will require the following assumption:
\begin{assumption}[Subset condition]\label{assum:ssc}
We say that the bipartite graph $\bipartite$ satisfies the subset condition (SSC) if for any pair of distinct hidden variables $\lat_i, \lat_j$ the set $\nbhd_{\Gamma}(H_i)$ is not a subset of $\nbhd_{\Gamma}(H_j)$. 
\end{assumption}

This assumption is weaker than the common ``anchor words" assumption from the topic modeling literature. The latter assumption says that every topic has a word that is unique to this topic, and it is commonly assumed for efficient recovery of latent structure~\cite{arora2012learning, arora2013practical}.

Under Assumption~\ref{assum:ssc}, we have the following key result:
\begin{theorem}
\label{thm:main}
Under Assumptions~\ref{assm:twins},~\ref{assm:minimal},~\ref{assump:strong}, and~\ref{assum:ssc}, $(\bipartite,\pr(\lat))$ can be reconstructed from $\pr(X)$ and $\oracle(S)$. Furthermore, if additionally the columns of the bipartite adjacency matrix $\adj$ are linearly independent, there is an efficient algorithm for this reconstruction.
\end{theorem}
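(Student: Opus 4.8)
The plan is to split the reconstruction into the two promised pieces, recovering $\bipartite$ first and then $\pr(\lat)$, with the counting identity of Observation~\ref{obs:counting} serving as the bridge between the combinatorial structure of $\bipartite$ and the outputs of $\oracle$. Throughout, recall that $\nbhd_{\bipartite}(H_i)=\ch(H_i)\subseteq\obs$ is the $i$-th column of $\adj$, so recovering $\bipartite$ amounts to recovering the family of child-sets $\{\ch(H_i)\}_{i\in[\nlat]}$ together with the latent dimensions $\dim(H_i)$.

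\textbf{Recovering $\bipartite$ (general case).} I would set $f(\nhbd):=\log k(\nhbd)$; by Observation~\ref{obs:counting}, $f(\nhbd)=\sum_{H_i\in\pa(\nhbd)}w_i$ with weights $w_i:=\log\dim(H_i)\ge\log 2>0$, i.e. $f$ is the weighted coverage function of the family $\{\ch(H_i)\}$, and the idea is to invert this coverage function. Writing $W:=f(\obs)=\sum_i w_i$ and noting that $H_i\in\pa(\obs\setminus R)$ iff $\ch(H_i)\not\subseteq R$, the ``downset'' function $\bar g(R):=\sum_{i:\,\ch(H_i)\subseteq R}w_i$ equals $W-f(\obs\setminus R)$ and is therefore computable from $\oracle$ for every $R\subseteq\obs$. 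Applying Möbius inversion over the subset lattice to $\bar g(R)=\sum_{R'\subseteq R}d(R')$ yields $d(R):=\sum_{i:\,\ch(H_i)=R}w_i$. By Assumption~\ref{assm:twins} at most one hidden variable has a given child-set, so $d(R)>0$ exactly when $R=\ch(H_i)$ for a unique $i$, and then $\dim(H_i)=\exp(d(R))$. Enumerating the $R$ with $d(R)>0$ thus recovers every column of $\adj$ and every latent dimension exactly. This argument uses only Assumptions~\ref{assm:twins} and~\ref{assump:strong} (through Observation~\ref{obs:counting}); it is the brute-force routine of Theorem~\ref{thm:hidden-bipartite} and runs in time exponential in $\nobs$ because of the lattice-wide inversion. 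For the efficient claim, when the columns of $\adj$ are linearly independent I would instead assemble a low-order tensor from the marginal components $\comp{\nhbd}{j}$ over pairs and triples of observed variables and apply Jennrich's algorithm to read off the columns of $\adj$ in polynomial time (Theorem~\ref{thm:recovery:eff}).

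\textbf{Recovering $\pr(\lat)$.} Once $\bipartite$ and the dimensions are known, $\oracle(\obs)$ returns $\nmix=|\Hdom|$ components, and by Assumption~\ref{assump:strong} each configuration $h\in\Hdom$ induces a distinct component, so the weights $\wgt{\obs}{j}$ are exactly the numbers $\{\pr(\lat=h)\}_{h\in\Hdom}$ in some unknown order. It then remains to attach to each full component $j$ a consistent state label $c_i(j)\in[\dim(H_i)]$ for every hidden variable, where consistency means $c_i(j)=c_i(j')$ iff $h^{(j)}_i=h^{(j')}_i$; the labeled weights are then $\pr(\lat)$ up to the unavoidable relabeling of each latent state space. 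The plan is to obtain these labels by matching the component \emph{distributions} $\comp{\nhbd}{j}$ across overlapping subsets: each full component restricts to a unique marginal component over any smaller set, and by comparing subsets that share $H_i$ as a common parent but differ in their remaining parents one can isolate the factor associated with $H_i$ and read off its state. This is precisely where Assumption~\ref{assum:ssc} enters: since $\ch(H_i)\not\subseteq\ch(H_j)$ for every $j\neq i$, each $H_i$ has a child not ``dominated'' by any single other latent, which makes it separable in the marginal mixtures; this is the content of Theorem~\ref{thm:Ph}, whose efficient variant again leverages linear independence of the columns of $\adj$.

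\textbf{Main obstacle.} The bipartite recovery is essentially a clean combinatorial inversion, and I expect it to be routine once Observation~\ref{obs:counting} is in hand. The crux is the labeling and alignment step for $\pr(\lat)$: the oracle returns components and weights with no canonical indexing, so the real work is to consistently \emph{disentangle} each individual latent $H_i$ from its co-parents using only marginal mixtures over subsets of $\obs$, and to prove that Assumption~\ref{assum:ssc}---which is strictly weaker than the usual anchor-word assumption---suffices to guarantee this separation for all hidden variables simultaneously. Verifying that the matched labels are mutually consistent (so the reassembled joint is well-defined) and that the whole procedure becomes polynomial-time under linear independence is where I would expect the technical effort to concentrate.
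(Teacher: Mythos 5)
Your overall decomposition---recover $\bipartite$ and the latent dimensions first via the counting identity of Observation~\ref{obs:counting}, then recover $\pr(\lat)$ by consistently labeling the $K=|\Hdom|$ full mixture components, with Jennrich's algorithm supplying the efficient variant under linear independence of the columns of $\adj$---is exactly the paper's. Within that skeleton, your argument for the bipartite part is a genuinely different and complete route: the paper computes the common-neighborhood scores $\comW_{\Gamma}(S)$ by inclusion--exclusion, identifies a ``maximal neighborhood block'' as the signature of a single hidden variable, peels it off, and inducts on $|\lat|$ (Theorem~\ref{thm:hidden-bipartite}\ref{thm:hidden-bipartite:exp} via Lemma~\ref{lem:max-nb-blocks-app}), whereas you M\"obius-invert the downset function $\bar g(R)=W-\log k(\obs\setminus R)=\sum_{i:\,\ch(H_i)\subseteq R}w_i$ in one shot to obtain $d(R)=\sum_{i:\,\ch(H_i)=R}w_i$, which under Assumption~\ref{assm:twins} is supported exactly on the columns of $\adj$ with $d(\ch(H_i))=\log\dim(H_i)$. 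Both arguments use the same assumptions and the same exponential amount of oracle access; yours trades the induction for a closed-form inversion and is, if anything, cleaner. (One caveat worth a sentence: a hidden variable with $\ch(H_i)=\emptyset$ is invisible to both arguments and must be excluded, e.g.\ by Assumption~\ref{assum:ssc}.)

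The gap is in the second half. For $\pr(\lat)$ you correctly identify the crux---attaching mutually consistent state labels to the unordered components returned by $\oracle(\obs)$---but you defer precisely the step the paper actually proves. The paper's mechanism is Lemma~\ref{lem:hidden-directions}: Assumption~\ref{assum:ssc} guarantees $\nbhd_{\Gamma}(\obs\setminus\nbhd_{\Gamma}(H_i))=\lat\setminus\{H_i\}$, so by Assumption~\ref{assump:strong}\ref{assump:strong:SDC} two full components have identical marginals on $\obs\setminus\ch(H_i)$ if and only if their hidden states differ only in the coordinate $H_i$. This partitions the $K$ components, for each $i$, into classes of size exactly $\dim(H_i)$, and the consistent labeling is then built by induction on Hamming weight: anchor an arbitrary component at $h=(0,\dots,0)$, label its classmates as the weight-one states, and recover each weight-$(t+1)$ state as the unique component in the intersection of the two classes through two already-labeled weight-$t$ neighbors. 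Note that the direction of SSC doing the work here is $\ch(H_j)\not\subseteq\ch(H_i)$ for all $j\ne i$ (every \emph{other} latent still parents something outside $\ch(H_i)$), not the separability of $H_i$ itself; and this step is efficient without any linear-independence hypothesis---that hypothesis is used only for the tensor decomposition in Theorem~\ref{thm:recovery:eff}. To turn your sketch into a proof you would need to state and prove the analogue of Lemma~\ref{lem:hidden-directions} and verify the consistency of the Hamming-weight extension.
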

The proof is constructive and leads to an efficient algorithm as alluded to in the previous theorem. An overview of the main ideas behind the proof of this result are presented in Sections~\ref{sec:bipartite} and~\ref{sec:Ph}; the complete proof of this theorem can be found in Appendices~\ref{app:bipartite}-\ref{app:proof:main}.

\camadd{As presented, Theorem~\ref{thm:main} leaves two aspects of the problem unresolved: 1) Under what conditions does $\oracle(S)$ exist, and 2) How can we identify $\latentgr$ from $\pr(\lat)$? As it turns out, each of these problems is well-studied in previous work, which explains our presentation of Theorem~\ref{thm:main}. For completeness, we address these problems briefly below.}

\camadd{
\paragraph{Existence of $\oracle(S)$}

A mixture oracle exists if the mixture model over $X$ is identifiable.
As discussed in Section~\ref{sec:intro}, such identifiability results are readily available in the literature. For example, assume that for every $S\subseteq X$, the mixture model \eqref{eq:hidden:mix} comes from any of the following families:
\begin{enumerate}
    \item a mixture of gaussian distributions~\cite{teicher1963identifiability, yakowitz1968identifiability}, or
    \item a mixture of Gamma distributions~\cite{teicher1963identifiability}, or
    \item an exponential family mixture~\cite{yakowitz1968identifiability}, or
    \item a mixture of product distributions~\cite{teicher1967identifiability}, or
    \item a well-separated (i.e. in TV distance) nonparametric mixture~\cite{aragam2018npmix}.
\end{enumerate}
Then $(\bipartite,\pr(\lat))$ is identifiable.
The list above is by no means exhaustive, and many other results on identifiability of mixture models are known (e.g., see the survey~\cite{mclachlan2019finite}). 
}

\paragraph{Identifiability of $\latentgr$}
Once we know $\pr(\lat)$ (e.g. via  Theorem~\ref{thm:main}), identifying $\latentgr$ from $\pr(\lat)$ is a well-studied problem with many solutions \citep{spirtes2000,pearl1988}. For simplicity, it suffices to assume that $\pr(\lat)$ is faithful to $\latentgr$, which implies that $\latentgr$ can be learned up to Markov equivalence. This assumption is \emph{not} necessary, and any number of alternative identifiability assumptions on $\pr(\lat)$ can be plugged in place of faithfulness, for example triangle faithfulness \citep{spirtes2014uniformly}, independent noise \citep{shimizu2006,peters2014}, post-nonlinearity \citep{zhang2009}, equality of variances \citep{peters2013,gao2020npvar}, etc.

\section{Learning the bipartite graph}
\label{sec:bipartite}

In this section we outline the main ideas behind the recovery of $\bipartite$ in Theorem~\ref{thm:main}.
We begin by establishing conditions that ensure $\bipartite$ is identifiable, and then proceed to consider efficient algorithms for its recovery.

\subsection{Identifiability result}

We study a slightly more general setup in which the identifiability of $\bipartite$ depends on how much information we request from the $\oracle$. Clearly, we want to rely on $\oracle$ as little as possible. As the proofs in the supplement indicate, the only information required for this step are the number of components. Neither the weights nor the components are needed.

\begin{definition}\label{def:k-recoveroble}
We say that $\Gamma$ is $t$-recoverable if $\Gamma$ can be uniquely recovered from $X$ and the sequence $(\oracle(S)\mid |S|\le t)$.
\end{definition}

\begin{theorem}\label{thm:hidden-bipartite}
Let $\Gamma$ be the bipartite graph between $X$ and $H$. 
\begin{enumerate}[label=(\alph*)]
    \item\label{thm:hidden-bipartite:exp} Assume that $\nbhd_{\Gamma}(H_i)\neq \nbhd_{\Gamma}(H_j)$ for any $i\neq j$. Then $\Gamma$  and $\dim(H_i)$ are $n$-recoverable.
    \item\label{thm:hidden-bipartite:eff} Let $t\geq 3$. Assume that for every $S\subseteq H$ with $|S|\geq 2$ we have
\[ \dim \Span \{a_j \mid j\in S\}\geq \dfrac{2}{t}|S|+1,\]
then $\Gamma$ and $\dim(H_i)$ are $t$-recoverable.
\end{enumerate}
  
\end{theorem}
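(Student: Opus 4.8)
The engine for both parts is Observation~\ref{obs:counting}, which converts oracle outputs into dimension counts: for every $S$ with $|S|\le t$ the oracle yields $k(S)=\dim\pa(S)=\prod_{H_i\in\pa(S)}\dim(H_i)$. Taking logarithms linearizes the product. Writing $d_i=\log\dim(H_i)>0$ and $N_i=\nbhd_{\Gamma}(H_i)$ for the neighborhood (the support of the column $a_i$), I set $F(S):=\log k(S)=\sum_{i:\,N_i\cap S\neq\emptyset}d_i$. For part (a) we have $t=n$, so every subset is queryable; using complements I define $G(S):=F(X)-F(X\setminus S)=\sum_{i:\,N_i\subseteq S}d_i$. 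This is a zeta transform over the Boolean lattice, and Möbius inversion gives $\sum_{T\subseteq S}(-1)^{|S\setminus T|}G(T)=\sum_{i:\,N_i=S}d_i$. Under Assumption~\ref{assm:twins} the $N_i$ are distinct, so this quantity is nonzero precisely at $S=N_i$, where it equals $d_i$; the nonzero locations are exactly the neighborhoods, which determine $\Gamma$, and $\dim(H_i)=\exp(d_i)$ recovers the state sizes. (Hidden nodes with no observed child never enter any $\pa(S)$ and are excluded by convention.)

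For part (b) only $|S|\le t$ is available, so the complement trick is lost and I would instead extract local common-parent weights. Expanding $\mathbb{1}[N_i\cap S=\emptyset]=\prod_{\ell\in S}(1-\mathbb{1}[\ell\in N_i])$ and collecting terms yields, for every $T$ with $1\le|T|\le t$, the inclusion--exclusion identity $\psi(T):=\sum_{i:\,T\subseteq N_i}d_i=\sum_{\emptyset\neq S\subseteq T}(-1)^{|S|+1}F(S)$, so all of these weights are computable from queries of size at most $t$. Since $T\subseteq N_i$ iff $\prod_{\ell\in T}a_i(\ell)=1$, we have $\psi(T)=\sum_i d_i\prod_{\ell\in T}a_i(\ell)$; that is, $\psi$ records exactly the multilinear (distinct-index) entries of the symmetric moment tensors $M_s=\sum_i d_i\,a_i^{\otimes s}$ for $s=1,\dots,t$. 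Thus reconstructing $\Gamma$ and the dimensions reduces to recovering the rank-one terms $(a_i,d_i)$ from these order-$\le t$ moments.

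The crux is therefore a uniqueness statement: under the spread hypothesis $\dim\Span\{a_j:j\in S\}\ge\frac{2}{t}|S|+1$, the terms $(a_i,d_i)$ form the only weighted family of distinct nonzero $0/1$ columns reproducing $\psi$ on all $|T|\le t$. Equivalently, stacking the orders $s=1,\dots,t$, the vectors $v_i=(\mathbb{1}[T\subseteq N_i])_{1\le|T|\le t}$ are linearly independent, so the true weights are the unique solution. I would prove this by a peeling/induction argument: a nontrivial relation $\sum_{i}c_i\prod_{\ell\in T}a_i(\ell)=0$ holding for all $|T|\le t$ would, when restricted to the set $S$ of indices with $c_i\ne 0$, force $\dim\Span\{a_j:j\in S\}$ to be smaller than what the hypothesis permits, since each order $s$ can account for only a bounded increment of rank and the factor $2/t$ is exactly the per-order budget that lets the orders $1,\dots,t$ jointly certify full column rank. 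This Kruskal-type rank count, made uniform over all subsets $S$ rather than only the full index set, is the step I expect to be most delicate; I would phrase the final recovery constructively --- peeling components off in increasing order of $|N_i|$ and certifying each via the rank condition --- so that only the spread of the true columns is ever invoked, avoiding any assumption on a competing graph $\Gamma'$.
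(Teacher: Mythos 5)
Your argument for part~(a) is correct, and it takes a genuinely different route from the paper. You compute $G(S)=F(X)-F(X\setminus S)=\sum_{i:\,N_i\subseteq S}d_i$ and apply M\"obius inversion over the Boolean lattice to read off $\sum_{i:\,N_i=S}d_i$ directly, so the no-twins assumption immediately localizes each neighborhood and its weight in one shot. The paper instead works with the ``superset sums'' $\comW_{\Gamma}(S)=\sum_{i:\,S\subseteq N_i}d_i$ (Lemma~\ref{lem:com-weight}), isolates an inclusion-maximal neighborhood as a \emph{maximal neighborhood block} (Lemma~\ref{lem:max-nb-blocks-app}), and peels hidden vertices off one at a time by induction on $|H|$ (Theorem~\ref{thm:hidden-bipartite-app}). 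Both are inclusion--exclusion arguments requiring queries on all subsets; yours is arguably cleaner and non-inductive, while the paper's peeling version has the advantage that its main lemma is reused verbatim in the efficient $t$-bounded setting.

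For part~(b) there is a genuine gap. You correctly reduce the problem to the same object as the paper: the entries $\psi(T)=\sum_{i:\,T\subseteq N_i}d_i$ are exactly the entries of the symmetric tensor $M_t=\sum_i d_i\,a_i^{\otimes t}$, computable from queries of size at most $t$, so everything hinges on uniqueness of this rank decomposition. But your proposed certificate --- linear independence of the stacked moment vectors $v_i=(\mathbb{1}[T\subseteq N_i])_{|T|\le t}$ of the \emph{true} columns --- is not equivalent to uniqueness and does not deliver it: a competing family $\{(a'_j,d'_j)\}$ with different supports yields a cancellation $\sum_k c_k\prod_{\ell\in T}b_k(\ell)=0$ over the \emph{union} of the two families, and the spread hypothesis gives you no control over the competing columns, so restricting the relation to the true indices proves nothing. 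You acknowledge this (``avoiding any assumption on a competing graph $\Gamma'$'') but the peeling/induction you sketch does not resolve it; making a Kruskal-type count work while invoking only the rank of the true factors is precisely the hard content of Kruskal's theorem. The paper closes this step by citing the Lovitz--Petrov generalization of Kruskal's condition (Theorem~\ref{thm:lovitzpetrov}), whose hypothesis $2|S|\le\sum_{i=1}^{t}(d_i(S)-1)+1$ specializes, for the symmetric choice $x_j^{(1)}=w(j)a_j$, $x_j^{(i)}=a_j$, exactly to the stated bound $\dim\Span\{a_j\mid j\in S\}\ge\frac{2}{t}|S|+1$ (Corollary~\ref{cor:fromKruskal}); recoverability then follows from Lemma~\ref{lem:trecovery:reform-app}. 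Without importing such a uniqueness theorem (or proving one), your part~(b) is incomplete.
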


Note that Assumption~\ref{assum:ssc} implies the assumption in Theorem~\ref{thm:hidden-bipartite}\ref{thm:hidden-bipartite:exp}. Finally, as in Section~\ref{sec:bg}, we argue that in the absence of additional assumptions, this assumption is in fact necessary:
\begin{observation}\label{obs:twins-gamma-norecovery}
If there is a pair of distinct variables $H_i, H_j\in H$ such that $\nbhd_{\Gamma}(H_1) = \nbhd_{\Gamma}(H_2)$, then $\Gamma$ is not $n$-recoverable.  
\end{observation}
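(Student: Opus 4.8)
The plan is to establish non-uniqueness directly: I will exhibit a second latent-variable model whose bipartite graph $\Gamma'$ differs from $\Gamma$ but which is completely indistinguishable from the information allowed by Definition~\ref{def:k-recoveroble}, namely $X$ together with the full sequence of oracle answers. Since here $t=n=|X|$, $n$-recoverability grants access to $\oracle(S)$ for \emph{every} $S\subseteq X$, so this is the strongest possible form of non-recoverability to rule out. The key fact I will exploit is that $\oracle(S)$ is a deterministic functional of the marginal distribution $\pr(S)$ alone: its number of components, weights and components are read off from the (identifiable) mixture decomposition of $\pr(S)$, so the entire sequence $(\oracle(S)\mid |S|\le n)$ is a function of $\pr(X)$. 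It therefore suffices to produce a genuinely different admissible model inducing the \emph{same} observed distribution $\pr(X)$.

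The construction is the merge operation that motivated Assumption~\ref{assm:twins}. Given the twin pair with $\nbhd_{\Gamma}(H_i)=\nbhd_{\Gamma}(H_j)$, I delete $H_i,H_j$ and introduce a single variable $H_0$ taking values in $\Hdom_i\times\Hdom_j$, set $\nbhd_{\Gamma'}(H_0):=\nbhd_{\Gamma}(H_i)=\nbhd_{\Gamma}(H_j)$, and leave all other neighborhoods unchanged. Because the two twins share a neighborhood, every observed node has either both or neither of $H_i,H_j$ as a parent, so $\Gamma'$ is a well-defined bipartite graph. I then transport the law verbatim: put $\pr'(H_0=(a,b),H_{\mathrm{rest}}):=\pr(H_i=a,H_j=b,H_{\mathrm{rest}})$ and let each observed conditional in the new model equal the corresponding one in $\gr$ under the identification $H_0=(a,b)\leftrightarrow(H_i=a,H_j=b)$. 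This is a mere relabelling of the same joint law, so $\pr'(X,H')$ is Markov with respect to a DAG $\gr'$ whose bipartite part is $\Gamma'$ (take any latent DAG $\latentgr'$ over the merged variables reproducing $\pr'(H')$, e.g.\ a complete order), and by construction $\pr'(X)=\pr(X)$.

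With both models inducing the identical observed law, every marginal satisfies $\pr'(S)=\pr(S)$, hence $\oracle(S)$ returns the same number of components, weights, and components in both models for all $S\subseteq X$. As a consistency check this is also visible through Observation~\ref{obs:counting}: for any $S$ meeting the shared neighborhood the original parent set contributes the factor $\dim(H_i)\dim(H_j)$ to $\dim(\pa(S))$, while the merged model contributes $\dim(H_0)=\dim(H_i)\dim(H_j)$, so $k(S)$ is unchanged, and for $S$ disjoint from the shared neighborhood the parent sets coincide outright. Since $\Gamma$ and $\Gamma'$ have different numbers of latent vertices they are distinct bipartite graphs, yet they are consistent with exactly the same input $\big(X,(\oracle(S)\mid |S|\le n)\big)$; therefore $\Gamma$ is not $n$-recoverable.

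The one delicate point, and the step I would treat most carefully, is verifying that $\gr'$ is a \emph{legitimate} competing model, i.e.\ that it respects the standing structural assumptions (no edges among observed variables, all bipartite edges oriented from latent to observed, and a valid Markov factorization over a DAG). This is what makes $\Gamma'$ an admissible alternative rather than an artifact of bookkeeping, and it is precisely where the hypothesis $\nbhd_{\Gamma}(H_i)=\nbhd_{\Gamma}(H_j)$ is used: it guarantees that the merged neighborhood is well defined and that no observed node acquires a spurious or only partial dependence after merging.
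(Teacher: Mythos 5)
Your proposal is correct and uses essentially the same construction as the paper: merge the twin pair $H_i,H_j$ into a single latent variable $H^*$ on $\Hdom_i\times\Hdom_j$ with the shared neighborhood, yielding a distinct $\Gamma'$ indistinguishable from $\Gamma$. The only difference is one of packaging — the paper verifies indistinguishability in its abstracted weighted-bipartite framework by checking that the additive score $W_{\Gamma}(S)=W_{\Gamma'}(S)$ (with $w(H^*)=w(H_i)+w(H_j)$) is preserved, whereas you verify the stronger fact that the full observed law $\pr(X)$ is unchanged, so every output of $\oracle(S)$ (counts, weights, and components) agrees; your version is, if anything, slightly more complete relative to the oracle-based Definition~\ref{def:k-recoveroble}.
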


\subsection{Ideas behind the recovery}\label{sec:recovery-ideas}

In Corollary~\ref{cor:weight-observed-hidden} below, we recast Observation~\ref{obs:counting} as an additive identity. This transforms the problem of learning $\Gamma$ into an instance of more general problem that is discussed in the appendix. The results of this section apply to this more general version.

\begin{corollary}\label{cor:weight-observed-hidden}
Assume that Assumptions~\ref{assump:strong} hold. For $H_i \in H$ define $w(H_i) = \log(\dim(H_i))$. Then for every set $S\subseteq X$
\begin{equation}
    \log(k(S)) = \sum\limits_{H_i\in \pa(S)} w(H_i).
\end{equation}
\end{corollary}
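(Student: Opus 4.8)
The plan is to take logarithms of the multiplicative identity already established in Observation~\ref{obs:counting}, so the corollary is an immediate additive restatement rather than a new computation. Recall that Observation~\ref{obs:counting}, which holds under Assumption~\ref{assump:strong}, asserts that for any $S\subseteq\obs$ the number of mixture components satisfies $k(S)=\prod_{H_i\in\pa(S)}\dim(H_i)$. The entire content of the corollary is transported through this identity; the role of the logarithm is merely to turn the product over $\pa(S)$ into a sum, which is the form needed for the set-function reconstruction arguments developed later in this section.

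Concretely, I would apply $\log$ to both sides of Observation~\ref{obs:counting} and use the standard rule $\log\big(\prod_i a_i\big)=\sum_i\log(a_i)$ to obtain $\log(k(S))=\sum_{H_i\in\pa(S)}\log(\dim(H_i))$. Substituting the definition $w(H_i)=\log(\dim(H_i))$ then yields exactly $\log(k(S))=\sum_{H_i\in\pa(S)}w(H_i)$, as claimed. No case analysis or inductive structure is required, since the product identity already ranges over the correct index set $\pa(S)$.

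The only point worth checking is that every term is well-defined and finite, so that the passage to logarithms is legitimate. Because each latent domain satisfies $|\Hdom_i|\ge 2$ by the standing assumptions in Section~\ref{sec:bg}, we have $\dim(H_i)\ge 2$, hence each $w(H_i)=\log(\dim(H_i))>0$ and $k(S)\ge 1$, so no logarithm of zero or of a degenerate factor can occur. I therefore do not expect any genuine obstacle in this step: the substantive work lies entirely in Observation~\ref{obs:counting} (which in turn rests on the nondegeneracy in Assumption~\ref{assump:strong} ensuring distinct parent configurations induce distinct marginal components), and the corollary simply repackages that counting fact as a linear-algebraic constraint $\log(k(S))=\sum_{H_i\in\pa(S)}w(H_i)$ relating observed quantities on the left to unknown vertex weights on the right.
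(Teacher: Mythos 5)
Your proposal is correct and matches the paper's intended argument exactly: the paper explicitly introduces this corollary as a ``recast'' of Observation~\ref{obs:counting} into additive form, so taking logarithms of the product identity $k(S)=\prod_{H_i\in\pa(S)}\dim(H_i)$ and substituting $w(H_i)=\log(\dim(H_i))$ is precisely the proof. Your additional check that $\dim(H_i)\ge 2$ keeps all terms finite and positive is a sensible (if unstated in the paper) sanity check.
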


In order to argue about the causal structure of the hidden variables we first need to identify the variables themselves. By Assumption~\ref{assm:twins}, every hidden variable leaves a ``signature'' among the observed variables, which is the set $\nbhd_{\Gamma}(H_i)$ of observed variables it affects. In particular, note that $H_i\in \bigcap_{X_s\in \nbhd_{\Gamma}(H_i)} \pa(X_s)$, and if there is no $H_j$ with $\nbhd_{\Gamma}(H_i)\subset \nbhd_{\Gamma}(H_j)$, then $H_i$ is the unique element of the intersection. The lemma above allows us to extract information about the union of parent sets, and we wish to turn it into the information about intersections. This motivates the following definitions.
\begin{definition} Let $\Gamma$ and $w$ be as above. Define
\begin{equation}
\label{eq:sne:Wsne}
    \com_{\Gamma}(S) = \bigcap\limits_{x\in S} \nbhd_{\Gamma}(x) \quad \text{and}\quad  \comW_{\Gamma}(S) = \sum\limits_{v\in \com_{\Gamma}(S)} w(v)  
\end{equation}
\end{definition}

\begin{lemma}\label{lem:com-weight}
For a set $S\subseteq X$ we have
\begin{equation}\label{eq:common-par-comp-gr}
    \comW_{\Gamma}(S) = \sum\limits_{U\subseteq S, U\neq \emptyset} (-1)^{|U|+1}W_{\Gamma}(U),\quad  \text{where}\quad W_{\Gamma}(S) = \sum\limits_{v \in \nbhd_{\Gamma}(S)} w(v).
\end{equation}
\end{lemma}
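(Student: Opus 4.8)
The plan is to prove the identity by comparing, for each vertex $v\in X$, the coefficient of the weight $w(v)$ on the two sides. Since both $\comW_{\Gamma}(S)$ and the alternating sum on the right are linear combinations of the weights $\{w(v)\}_{v\in X}$ with integer coefficients, it suffices to verify that these coefficients agree for every $v$. This recasts the claim as a weighted version of the dual inclusion--exclusion formula (expressing an \emph{intersection} of neighborhoods through \emph{unions} of neighborhoods), and the whole argument reduces to a short sign computation.

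First I would fix $v\in X$ and read off its coefficient on the left-hand side. By definition $\comW_{\Gamma}(S)=\sum_{u\in\com_{\Gamma}(S)}w(u)$ with $\com_{\Gamma}(S)=\bigcap_{x\in S}\nbhd_{\Gamma}(x)$, so the coefficient of $w(v)$ in $\comW_{\Gamma}(S)$ is $1$ when $v\in\nbhd_{\Gamma}(x)$ for every $x\in S$, and $0$ otherwise. Next I would compute the coefficient on the right-hand side. Introduce $T_v:=\{x\in S : v\in\nbhd_{\Gamma}(x)\}$, the set of elements of $S$ adjacent to $v$. Since $W_{\Gamma}(U)=\sum_{u\in\nbhd_{\Gamma}(U)}w(u)$ with $\nbhd_{\Gamma}(U)=\bigcup_{x\in U}\nbhd_{\Gamma}(x)$, the vertex $v$ contributes to $W_{\Gamma}(U)$ precisely when $U\cap T_v\neq\emptyset$. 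Hence the coefficient of $w(v)$ on the right equals $\sum_{\emptyset\neq U\subseteq S,\ U\cap T_v\neq\emptyset}(-1)^{|U|+1}$.

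Then I would evaluate this sum by complementation, subtracting the contribution of the nonempty $U$ that miss $T_v$ (equivalently $U\subseteq S\setminus T_v$) from that of all nonempty $U\subseteq S$. Using the elementary identity $\sum_{U\subseteq A}(-1)^{|U|}=0^{|A|}$, which gives $\sum_{\emptyset\neq U\subseteq A}(-1)^{|U|+1}=1-0^{|A|}$, the coefficient becomes $(1-0^{|S|})-(1-0^{|S\setminus T_v|})=0^{|S\setminus T_v|}$ for nonempty $S$ (since $0^{|S|}=0$). This equals $1$ exactly when $S\setminus T_v=\emptyset$, i.e. $T_v=S$, i.e. $v$ is adjacent to every element of $S$ --- which is precisely the condition characterizing the left-hand coefficient. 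Matching coefficients for all $v$ then yields the identity.

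The computation has no genuinely hard step; the points requiring care are the bookkeeping of signs and the empty-set boundary cases in the inclusion--exclusion sum (in particular the value of $0^{|A|}$ when $A=\emptyset$), together with the observation that, because the statement is an identity among integer-weighted vertex sums rather than a set identity, it is cleanest to verify it coefficient-by-coefficient instead of manipulating the sets $\com_{\Gamma}$ and $\nbhd_{\Gamma}$ directly.
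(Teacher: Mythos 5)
Your argument is correct and is exactly the ``simple application of the Inclusion--Exclusion principle'' that the paper invokes without spelling out: comparing the coefficient of $w(v)$ on both sides via the indicator computation $\sum_{\emptyset\neq U\subseteq S,\ U\cap T_v\neq\emptyset}(-1)^{|U|+1}=0^{|S\setminus T_v|}$ is the standard way to make that one-line claim rigorous. The only slip is notational: since $\Gamma$ is bipartite and $S\subseteq X$, the vertices carrying weights are the hidden ones, so you should range over $v\in H$ (where $w$ is defined), not $v\in X$; this does not affect the argument.
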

The proof of this lemma is a simple application of the Inclusion-Exclusion principle.

\begin{remark}\label{rem:compute:scores}
The RHS of Eq.~\eqref{eq:common-par-comp-gr} only depends on $W$ evaluated on subsets of $S$. Thus, in particular, if $|S|\leq t$ to compute $\comW(S)$ it is enough to know $\oracle$ on all sets of size $\leq t$.
\end{remark}

Finally, the values of the function $\comW_{\Gamma}$ can be organized into a tensor, and from here the problem of learning $\Gamma$ can be cast as decomposition problem for this tensor. These proof details are spelled out in Appendix~\ref{app:bipartite}; in the next section we illustrate this procedure for the special case of 3-recovery.

\subsection{Efficient $3$-recovery}

Under a simple additional assumption $\Gamma$ can be recovered efficiently. We are primarily interested in the case $t = 3$. The main idea is to note that a rank-three tensor involving the columns of $\adj$ can be written in terms of $\comW_{\Gamma}$. We can then apply Jennrich's algorithm \citep{harshman} 
to decompose the tensor and recover these columns, which yield $\Gamma$.  
To see this, let $I = (i_1, i_2, i_3)\subseteq X$ be a triple of indices, and note that
\begin{equation}\label{eq:3tensor-formula}
    \sum\limits_{j\in H} w(j)(a_j)_{i_1}(a_j)_{i_2}(a_j)_{i_3} = \Big(\sum\limits_{j\in H} w(j) a_j\otimes a_j \otimes a_j\Big)_{(i_1, i_2, i_3)} = \comW_{\Gamma}(I).
\end{equation}

\begin{theorem}
\label{thm:recovery:eff}
Assume that the columns of $\adj$ are linearly independent. Then $\Gamma$ and $\dim(H_i)$, for all $i$, are $3$-recoverable in $O(n^3)$ space and $O(n^4)$ time.
\end{theorem}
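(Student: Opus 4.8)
The plan is to cast the recovery of $\Gamma$ as a symmetric order-three tensor decomposition and invoke Jennrich's algorithm. First I would assemble the tensor $T\in\R^{n\times n\times n}$ with entries $T_{i_1i_2i_3}=\comW_{\Gamma}(\{i_1,i_2,i_3\})$. By Eq.~\eqref{eq:3tensor-formula}, for distinct indices this equals $\sum_{j\in H}w(j)(a_j)_{i_1}(a_j)_{i_2}(a_j)_{i_3}$; because each $a_j$ is $0/1$-valued the identity persists when indices coincide (e.g. $(a_j)_i^2=(a_j)_i$), so \emph{every} entry of $T$—including its diagonal entries—agrees with the symmetric form $T=\sum_{j\in H}w(j)\,a_j^{\otimes 3}$. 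Crucially, by Remark~\ref{rem:compute:scores} together with Corollary~\ref{cor:weight-observed-hidden} and Observation~\ref{obs:counting}, each value $\comW_{\Gamma}(U)$ with $|U|\le 3$ is recovered from the component counts $k(U')=\exp(W_{\Gamma}(U'))$ over the (at most seven) nonempty subsets $U'\subseteq U$ via the inclusion-exclusion identity of Lemma~\ref{lem:com-weight}. Hence the entire tensor is computable from $(\oracle(S)\mid |S|\le 3)$, which is exactly what $3$-recoverability demands.

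Next I would run Jennrich's algorithm on $T$. The weights satisfy $w(j)=\log\dim(H_j)\ge\log 2>0$, and the hypothesis makes the factor matrix $\adj=[a_1\mid\cdots\mid a_m]$ of full column rank. Contracting $T$ along its third mode against vectors $x,y$ yields $T_x=\adj D_x\adj^{\top}$ and $T_y=\adj D_y\adj^{\top}$ with $D_x=\diag\!\big(w(j)\,a_j^{\top}x\big)$ and $D_y$ analogous; the number of hidden variables is then read off as $m=\rank(T_x)$. For generic $x,y$ the generalized eigendecomposition of the pencil $(T_x,T_y)$ has the $a_j$ as eigenvectors with the distinct eigenvalues $(a_j^{\top}x)/(a_j^{\top}y)$, so the columns $a_j$ are recovered up to permutation and scaling. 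This is precisely where linear independence is essential: it guarantees both $\rank(T_x)=m$ and uniqueness of the decomposition, so the recovered eigenvectors are genuinely the signatures $a_j$.

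Finally I would resolve the scaling ambiguity and extract the outputs. Each recovered rank-one term is $\hat w_j\,\hat a_j^{\otimes 3}$, where $\hat a_j$ is a scalar multiple of some true column $a_j\in\{0,1\}^n$; rescaling $\hat a_j$ to have $0/1$ entries forces the accompanying weight to be positive, matching $w(j)>0$. Then $\supp(\hat a_j)=\nbhd_{\Gamma}(H_j)$, thresholding recovers the exact binary vector $a_j$, and collecting these gives all columns of $\adj$, hence $\Gamma$. The matching weight yields $w(j)$, and since $\dim(H_j)=\exp(w(j))$ is an integer $\ge 2$, rounding recovers $\dim(H_j)$ exactly (alternatively, once $\Gamma$ is known each $\dim(H_i)$ can be solved for directly from Observation~\ref{obs:counting}). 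For complexity: storing $T$ uses $O(n^3)$ space; populating its $O(n^3)$ entries by inclusion-exclusion is $O(n^3)$ work, and the linear-algebraic core of Jennrich (forming the two contractions and a generalized eigendecomposition of $n\times n$ matrices) is dominated by $O(n^3)$ operations, so the whole procedure fits within the stated $O(n^4)$ time and $O(n^3)$ space.

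The main obstacles are twofold. First, one must verify the tensor identity at \emph{all} index multiplicities so that the full symmetric tensor, not merely its off-diagonal part, coincides with $\sum_{j}w(j)\,a_j^{\otimes 3}$; this relies on the $0/1$ structure of the $a_j$. Second, one must justify the genericity needed by Jennrich (distinct eigenvalue ratios for a suitable choice of $x,y$) and carefully convert the scale-ambiguous eigenvectors back into exact $0/1$ signatures. The uniqueness of the symmetric decomposition rests entirely on the linear-independence hypothesis, so the argument should make explicit where that assumption enters.
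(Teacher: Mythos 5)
Your proposal is correct and follows essentially the same route as the paper: compute the order-three tensor $M_3$ entrywise from $\comW_{\Gamma}$ via the inclusion--exclusion identity of Lemma~\ref{lem:com-weight} applied to $W_{\Gamma}(U)=\log k(U)$, then apply Jennrich's algorithm, whose hypotheses are met because linear independence of the columns of $\adj$ gives full-rank factor matrices and pairwise non-parallel third factors, and finally read off $\Gamma$ from the supports of the recovered $a_j$ and $\dim(H_i)=\exp(w(i))$ from the recovered weights. Your write-up actually supplies more detail than the paper's one-line proof (notably the handling of repeated indices via the $0/1$ structure and the rescaling of the scale-ambiguous eigenvectors back to binary signatures), and the complexity accounting matches the paper's.
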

\begin{proof}
It takes $O(n^3)$ space and $O(n^3)$ time to compute $M_3$ and then Jennrich's algorithm can decompose the tensor in $O(n^3)$ space and $O(n^4)$ time.
\end{proof}

\section{Learning the latent distribution}
\label{sec:Ph}

In this section we outline the main ideas behind the recovery of $\pr(\lat)$ in Theorem~\ref{thm:main}. 

\begin{remark}
Since the variables $H$ are not observed, $\oracle(S)$ only tells us the set \[\{(i, \wgt{S}{i}, \comp{S}{i})\mid i \in [k(S)] \}. \]
But the correspondence $\Hdom \ni h \leftrightarrow j\in [K]$ between a possible tuple $h$ of values of hidden variables and the corresponding mixture component is unknown. 
\end{remark}

Since the values of $H$ are not observed, we may learn this correspondence only up to a relabeling of $\Hdom_{i}$. By definition, the input distribution has $K= |\Omega|$ mixture components over $X$ and $k_i=\ncomp(\obs_{i})$ mixture components over $X_i$. Fix any enumeration of these components by $[K]$ and $[k_i]$, respectively. 
To recover the correspondence $\Hdom \ni h \leftrightarrow j\in [K]$, we will need access to the map 
 \begin{equation}
      \compproj:[K]\to[k_{1}]\times\cdots\times[k_{n}],
 \end{equation} 
 defined so that $[L(j)]_i$ equals to the index of the mixture component $\comp{X}{j}$ (marginalized over $X_i$) in the marginal distribution over $X_i$.
 Crucially, this discussion establishes that $\compproj$ can be computed from a combination of $\oracle(X)$ and $\oracle(X_{i})$ for each $i$.
 
 The map $\compproj$ encodes partial information about the causal structure in $G$. Indeed, if $h_1, h_2\in \Hdom$ are a pair of states of hidden variables $H$ that coincide on $\pa(X_i)$ for some $X_i \in X$, then by the Markov property the components that correspond to $h_1$ and $h_2$ should have the same marginal distribution over $X_i$. 

\begin{figure}
\begin{subfigure}{0.6\textwidth}
    \centering
    \includegraphics[scale = 0.5]{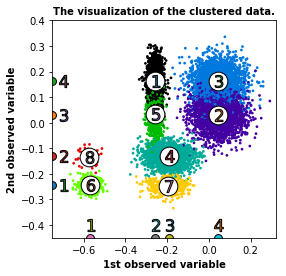}
    \label{fig:my_label}
\end{subfigure}
\begin{subfigure}{0.4\textwidth}
\begin{tikzpicture}[
            > = stealth, 
            shorten > = 1pt, 
            auto,
            node distance = 2cm, 
            semithick 
        ]

        \tikzstyle{every state}=[
            draw = black,
            thick,
            fill = white,
            minimum size = 4mm
        ]
        \node[state] (h1) {$H_1$};
        \node[state] (x1) [below of=h1] {$X_1$};
        \node[state] (h2) [right of=h1] {$H_2$};
        \node[state] (h3) [right of=h2] {$H_3$};
        \node[state] (x2) [below  of=h2] {$X_2$};
        \node[state] (x3) [below  of=h3] {$X_3$};

        \path[->] (h1) edge (x1);
        \path[->] (h1) edge (x2);
        \path[->] (h2) edge (x1);
        \path[->] (h2) edge (x3);
        \path[->] (h3) edge (x2);
        \path[->] (h3) edge (x3);
        \path (h3) edge node[above]{?} (h2);
        \path (h3) edge [bend right=30] node[above]{?} (h1);
        \path (h1) edge node[above]{?} (h2);
        
         \node [below right=.5cm, align=left,text width=8cm] at (x1)
        {
           A bipartite graph $\Gamma$
        };
\end{tikzpicture}
\end{subfigure}
\caption{Example of a latent DAG and corresponding mixture distribution}\label{fig:example:PHlearning}
\end{figure}

\begin{example}
\label{ex:Ph:reconstruct}
Consider the DAG on Figure~\ref{fig:example:PHlearning}. We do not make any assumptions about the causal structure between hidden variables. This DAG has $3$ hidden variables, and we assume that each of them takes values in the set $\{0, 1\}$. Then by Assumption~\ref{assump:strong}, every observed variable is a mixture of $4$ components, while the distribution on $X$ is a mixture of $8$ components. Note that the anchor word assumption is violated here, while (SSC) assumption is satisfied. The map $L:[8]\rightarrow [4]\times [4]\times [4]$ for an example as in Fig.~\ref{fig:example:PHlearning} has form
\begin{equation*}
\begin{matrix}
    i: &1 & 2 & 3 & 4 & 5 & 6 & 7 & 8\\
    L(i): & (2, 4, 3), &  (4, 3, 4), &  (4, 4, 2), &  (3, 2, 4), &  (2, 3, 1), &  (1, 1, 3), &  (3, 1, 2), &  (1, 2, 1)
    \end{matrix}
\end{equation*}
Our goal is to find the correspondence between $h\in \Hdom = \{0, 1\}^3$ and $i\in [8]$. (The projection on the third variable is not shown on Figure~\ref{fig:example:PHlearning}, so the third coordinate of $L$ cannot be deduced from the plot.)  
\end{example}

We now show that there is an algorithm that exactly recovers $\prob(H)$ from the bipartite graph $\Gamma$, the map $\compproj:[K]\to[k_{1}]\times\cdots\times[k_{n}]$, and the mixture weights (probabilities) $\{\wgt{X}{i} \mid i\in [K] \} = \{\prob(Z=i)\mid i\in [K]\}$.
Each of these inputs can be computed from $\oracle$. 

\begin{definition}
Let $\jointtable$ be an order-$m$ tensor whose $i$-th mode is indexed by values of $H_i$, such that $\jointtable(h_1, h_2, \ldots, h_m) =  \prob(H = h)$. That is, $\jointtable$ is the joint probability table of $H$.
\end{definition}

\begin{theorem}
\label{thm:Ph}
Suppose Assumptions~\ref{assump:strong} and~\ref{assum:ssc} hold. Then the correspondence $\Hdom \ni h\leftrightarrow \comp{X}{i}$ and the tensor $\jointtable(h_1, h_2, \ldots, h_m) = \prob(H = (h_1, h_2, \ldots, h_m))$ can be efficiently reconstructed from $L$, $\Gamma$ and $\{\wgt{X}{i}\}_{i\in [K]}$. 
\end{theorem}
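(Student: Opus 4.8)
The plan is to reduce the theorem to recovering, for each mixture component index $j\in[K]$, the latent configuration $h(j)\in\Hdom$ that induced it---equivalently, the bijection $\phi\colon[K]\to\Hdom$ with $\comp{X}{j}=\pr(X\mid H=\phi(j))$---up to the unavoidable relabeling of each $\Hdom_i$. Once $\phi$ is known the joint table follows immediately: the weight of component $j$ is $\wgt{X}{j}=\pr(\mix=j)=\pr(H=\phi(j))$, so setting $\jointtable(\phi(j))=\wgt{X}{j}$ reconstructs $\jointtable$. Thus all the content lies in reconstructing $\phi$ from $\compproj$ and $\Gamma$ alone.

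I would recover $\phi$ one coordinate at a time. Fix a hidden variable $H_i$, read its observed children $\ch(H_i)=\nbhd_{\Gamma}(H_i)$ off of $\Gamma$, and for $X_s\in X$ write $c_s(j):=[\compproj(j)]_s$ for the index of the marginal component of $\comp{X}{j}$ over $X_s$. By Observation~\ref{obs:counting} and the Markov property, $c_s(j)$ is a function of $\phi(j)$ restricted to $\pa(X_s)$, and by Assumption~\ref{assump:strong} this function is a bijection between the configurations of $\pa(X_s)$ and $[k_s]$; hence $c_s(j)=c_s(j')$ holds exactly when $\phi(j)$ and $\phi(j')$ agree on $\pa(X_s)$. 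Now form a graph on vertex set $[K]$ joining $j$ and $j'$ whenever $c_s(j)=c_s(j')$ for some $X_s\in\ch(H_i)$, and let $\equiv_i$ be the partition into connected components. The claim is that $\equiv_i$ is exactly the partition of $[K]$ by the value of $H_i$; labeling its blocks $1,\dots,\dim(H_i)$ arbitrarily defines $\phi_i\colon[K]\to\Hdom_i$, and $\phi=(\phi_1,\dots,\phi_m)$ is the desired map.

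Proving the claim splits into two directions. The easy direction is that every edge respects the value of $H_i$: since $H_i\in\pa(X_s)$ for each $X_s\in\ch(H_i)$, agreement on $\pa(X_s)$ forces agreement on $H_i$, so every block of $\equiv_i$ lies inside a single level set of $H_i$. The hard part---and the step I expect to be the main obstacle---is the converse, that any two components with the same $H_i$-value are connected, and this is precisely where Assumption~\ref{assum:ssc} and nondegeneracy enter. Fixing $H_i=a$, I would pass between two configurations in this level set by toggling the other coordinates one at a time; to change some $H_l$ with $l\neq i$ while keeping the components adjacent, I need an observed child $X_s\in\ch(H_i)$ with $H_l\notin\pa(X_s)$, so that the two configurations (differing only in $H_l$) agree on $\pa(X_s)$ and therefore satisfy $c_s(j)=c_s(j')$. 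Such an $X_s$ exists exactly because $\nbhd_{\Gamma}(H_i)\not\subseteq\nbhd_{\Gamma}(H_l)$, which is the subset condition; Assumption~\ref{assump:strong} guarantees that all the intermediate configurations actually occur as components. Chaining these single-coordinate toggles connects the whole level set, completing the claim.

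Finally I would assemble the pieces. Running the above for every $i$ produces $\phi=(\phi_1,\dots,\phi_m)$; since the resulting tuples take $\prod_i\dim(H_i)=|\Hdom|=K$ distinct values, $\phi$ is a bijection, giving the correspondence $\Hdom\ni h\leftrightarrow\comp{X}{j}$ up to relabeling of each $\Hdom_i$ (the best possible, as latent labels are unobserved), and then $\jointtable(\phi(j))=\wgt{X}{j}$ recovers the joint table. Each coordinate requires partitioning $[K]$ by at most $n$ coordinate maps $c_s$ and taking one transitive closure (e.g.\ via union--find), so the entire procedure runs in time polynomial in the input size $|[K]|\cdot n$, establishing the claimed efficiency.
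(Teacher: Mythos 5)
Your proof is correct, but it takes a genuinely different route from the paper's. The paper's argument (Appendix~\ref{app:proof:Ph}) hinges on Lemma~\ref{lem:hidden-directions}, which groups components by agreement of their marginals over the \emph{non-children} $X\setminus\nbhd_{\Gamma}(H_i)$: each group has size $\dim(H_i)$ and consists of the components whose latent states differ \emph{only} in $H_i$. Because this identifies fibers rather than level sets, the paper must then fix a base component for $h=(0,\dots,0)$ and propagate labels by induction on Hamming weight, disambiguating each new state as the unique element in the intersection of two previously computed groups. You instead recover, for each $H_i$, the \emph{level sets} $\{j : h(j)_i = a\}$ directly, as connected components of a graph whose edges record agreement of the marginal index $c_s$ over a \emph{single child} $X_s\in\ch(H_i)$; the full correspondence is then just the product map $\phi=(\phi_1,\dots,\phi_m)$, with no base point and no induction. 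Both arguments lean on the same two ingredients---Assumption~\ref{assump:strong}\ref{assump:strong:SDC} to make $c_s$ a bijection onto configurations of $\pa(X_s)$, and Assumption~\ref{assum:ssc} to guarantee connectivity---but they use SSC in opposite directions: the paper needs $\nbhd_{\Gamma}(H_l)\not\subseteq\nbhd_{\Gamma}(H_i)$ so that every other variable leaves a trace outside $\ch(H_i)$, whereas you need $\nbhd_{\Gamma}(H_i)\not\subseteq\nbhd_{\Gamma}(H_l)$ to find a child of $H_i$ insensitive to toggling $H_l$. Your version is arguably cleaner as an identifiability proof (bijectivity of $\phi$ is immediate since distinct components differ in some coordinate and hence in some $\equiv_i$-block), while the paper's fiber-based lemma maps more directly onto Algorithm~\ref{algo: ph learning} and its $O((nm+\max_i k_i)K)$ runtime; your union--find variant is also polynomial, just organized differently. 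The only cosmetic quibble is that you write $\phi(j)$ for the ground-truth configuration before $\phi$ is constructed, and the claim that the tuples $(\phi_1(j),\dots,\phi_m(j))$ are pairwise distinct deserves the one-line justification that distinct components arise from distinct latent states under nondegeneracy.
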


\begin{remark}\label{obs:ssc-assum-violated} 
If Assumption~\ref{assum:ssc} is violated, then in general $\jointtable$ cannot be reconstructed uniquely and moreover, $G$ cannot be uniquely identified. See Appendix~\ref{app:proof:Ph} for details.
\end{remark}

\subsection{Examples of Algorithm~\ref{algo: ph learning}}

To illustrate this algorithm, in this section we illustrate how it works on Example~\ref{ex:Ph:reconstruct}. The basic idea is the following: We start by arbitrarily assigning a component $\comp{\obs}{i}$---and hence its corresponding probability $\wgt{\obs}{i}$ to some hidden state $h^*=(h_1,\ldots,h_m)$. This assignment amounts to declaring $\pr(H_1=h_1,\ldots,H_m=h_m)=\wgt{\obs}{i}$ and $\pr(\obs\given H_1=h_1,\ldots,H_m=h_m)=\comp{\obs}{i}$.
The choice of initial state $h^*$ here is immaterial; this can be done without loss of generality since the values of the hidden variables can be relabeled without changing anything. From here we proceed inductively by considering hidden states that differ from the previously identified states by in exactly one coordinate. In the example below, we start with $h^*=(0,\ldots,0)$ and then use this as a base case to identify $h^*+e_i$ for each $i=1,\ldots,\nlat$, where
\begin{align*}
    (e_i)_j
    =\begin{cases}
    1 & i=j \\
    0 & i\ne j.
    \end{cases}
\end{align*}

Note that $h^*$ and $e_i$ differ in exactly one coordinate. We then repeat this process until all states have been exhausted. The following example illustrates the procedure and explains how Lemma~\ref{lem:hidden-directions} helps to resolve the ambiguity regarding the assignment of components to hidden states in each step.

\begin{example} Consider the DAG $\gr$ in Fig.~\ref{fig:example:PHlearning}. 
It has $3$ hidden variables, each of which takes values in $\{0, 1\}$. By Assumption~\ref{assump:strong} every observed variable is a mixture of $4$ components, while the distribution on $X$ is a mixture of $8$ components. Note that the anchor word assumption is violated here, while SSC (Assumption~\ref{assum:ssc}) is satisfied. The map $L:[8]\rightarrow [4]\times [4]\times [4]$ 
can be written as:
\begin{equation*}
\begin{matrix}
    i: &1 & 2 & 3 & 4 & 5 & 6 & 7 & 8\\
    L(i): & (2, 4, 3), &  (4, 3, 4), &  (4, 4, 2), &  (3, 2, 4), &  (2, 3, 1), &  (1, 1, 3), &  (3, 1, 2), &  (1, 2, 1)
    \end{matrix}
\end{equation*}
We want to find the correspondence between $h\in \Hdom = \{0, 1\}^3$ and $i\in [8]$. 

We start by picking an arbitrary component, say 1, and assign it to $(H_1, H_2, H_3) = (0, 0, 0)$. 
Next, we make use of Lemma~\ref{lem:hidden-directions}. Since we know $\bipartite$, we know $\ch(H_i)$ for each $i$. In particular, for the hidden variable $H_1$, we know $\ch(H_1) = \{X_1, X_2\}$. This implies that if $H_2, H_3$ are fixed while $H_1$ changes its value, then the component of $X_3$ is unchanged. It follows that the third coordinate of $L$ is also unchanged.
This gives us a way to pair up the components that have the same third coordinate $L(i)_3$; the pairs are $(1, 6)$, $(2, 4)$, $(3, 7)$ and $(5, 8)$. By our previous observation, these pairs are in one-to-one correspondence with unique states of $(H_1,H_2)=(h_1,h_2)$, and each pair identifies the pair of components $(P(X\given H_1=0,H_2=h_2,H_3=h_3), P(X\given H_1=1,H_2=h_2,H_3=h_3))$. Note that at this stage, there is still ambiguity as to which coordinate of each pair corresponds to which component.

Similarly, we can pair up the components that correspond to assignments of hidden variables that differ only in the value of $H_2$. The pairs are $(1, 3)$, $(2, 5)$, $(4, 8)$ and $(6, 7)$. Finally, for $H_3$ the pairs are $(1, 5)$, $(2, 3)$, $(4, 7)$ and $(6, 8)$.

Since component 1 is assigned to $(H_1, H_2, H_3) = (0, 0, 0)$ we can deduce that %
\begin{equation*}
\begin{matrix}
    (H_1, H_2, H_3): & (0, 0, 0), &  (1, 0, 0), &  (0, 1, 0), &  (1, 1, 0), &  (0, 0, 1), &  (1, 0, 1), &  (0, 1, 1), &  (1, 1, 1)\\
    comp. \#: & 1 & 6 & 3 & ? & 5 & ? & ? & ?
    \end{matrix}
\end{equation*}

Assume that we know which components correspond to the hidden variable state $(H_1, H_2, H_3) = (h_1, h_2', h_3)$ and $(H_1, H_2, H_3) = (h_1', h_2, h_3)$, with $h_1\ne h_1'$ and $h_2\ne h_2'$. Then we can use the information above to deduce which components correspond to the hidden state $(h_1', h_2', h_3)$ since it differs from them in just 1 position. Hence, we can deduce
\begin{equation*}
\begin{matrix}
    (H_1, H_2, H_3): & (0, 0, 0), &  (1, 0, 0), &  (0, 1, 0), &  (1, 1, 0), &  (0, 0, 1), &  (1, 0, 1), &  (0, 1, 1), &  (1, 1, 1)\\
    comp. \#: & 1 & 6 & 3 & 7 & 5 & 8 & 2 & ?
    \end{matrix}
\end{equation*}
Note that since $(1,1,1)$ differs from the four states identified in the first step in two entries, this has not been determined yet.
However, repeating this argument a third time we can deduce that component 4 corresponds to $(H_1, H_2, H_3) = (1, 1, 1)$.

\end{example}

To illustrate how this algorithm works in the case of non-binary latent variables we provide one more example.

\begin{example}\label{example2}
Assume that $\pr(\ver)$ is Markov with respect to the DAG $\gr$ in Figure~\ref{fig:example:PHlearning-app-2} where we make no assumption about causal relation between $H_1$ and $H_2$. Assume that $\dim(H_1) = \dim(H_2) = 3$.

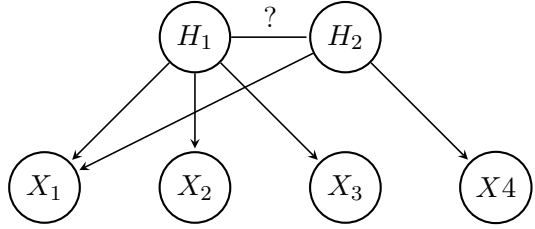
\begin{figure}
\begin{center}
\begin{tikzpicture}[
            > = stealth, %
            shorten > = 1pt, %
            auto,
            node distance = 2cm, %
            semithick %
        ]

        \tikzstyle{every state}=[
            draw = black,
            thick,
            fill = white,
            minimum size = 4mm
        ]
        \node[state] (h1) {$H_1$};
        \node[state] (h2) [right of=h1] {$H_2$};
        \node[state] (x2) [below  of=h1] {$X_2$};
        \node[state] (x3) [below  of=h2] {$X_3$};
        \node[state] (x4) [right of=x3] {$X4$};
        \node[state] (x1) [left of=x2] {$X_1$};

        \path[->] (h1) edge (x1);
        \path[->] (h1) edge (x2);
        \path[->] (h1) edge (x3);
        \path[->] (h2) edge (x1);
        \path[->] (h2) edge (x4);
        \path (h1) edge node[above]{?} (h2);
        
         \node [below right=.5cm, align=left,text width=8cm] at (x1)
        {
           
        };
\end{tikzpicture}
\end{center}
\caption{A bipartite graph $\Gamma$ in Example~\ref{example2}}\label{fig:example:PHlearning-app-2}
\end{figure}

Suppose that the map $L:[9]\rightarrow [9]\times [3]\times [3]\times [3]$ is given by: 
\begin{equation*}
\begin{matrix}
    i: &1 & 2 & 3 & 4 & 5\\ 
    L(i): & (1, 2, 1, 3), &  (3, 3, 3, 1), &  (4, 1, 2, 2), &  (2, 2, 1, 1), &  (7, 2, 1, 2), \\
    i: & 6 & 7 & 8 & 9 &\ \\
    L(i): &  (5, 1, 2, 1), &  (9, 1, 2, 3), &  (8, 3, 3, 3) & (6, 3, 3, 2) & \
    \end{matrix}
\end{equation*}
We want to find the correspondence between $h\in \Hdom = \{0, 1, 2\}^2$ and $i\in [9]$. 

As in the previous example, in order to see which components correspond to the states of latent variables where $H_2$ is fixed and $H_1$ takes all values in $\{0, 1, 2\}$ we group together the components that have the same value of $L$ on $X\setminus \ch(H_1) = \{X_4\}$. We get the following groups $(1, 7, 8)$, $(2, 4, 6)$ and $(3, 5, 9)$.

Similarly, by comparing the values of $L$ on $X\setminus \ch(H_2) = \{X_2, X_3\}$ we get that the following groups correspond to a fixed value of $H_1$, while $H_2$ vary: $(1, 4, 5)$, $(2, 8, 9)$ and $(3, 6, 7)$.

Since values of $H_i$ are determined up to relabeling we can arbitrarily assign a component, say 1, to $(H_1 = 0, H_2 = 0)$. Now, using Lemma~\ref{lem:hidden-directions}, we know that components that correspond to $(H_1 = 1, H_2 = 0)$ and $(H_1 = 2, H_2 = 0)$ are $7$ and $8$, and again because values of $H_i$ can be relabeled, at this point the choice is arbitrary. Using the similar argument for $H_2$, we can deduce the following correspondence:
\begin{equation*}
\begin{matrix}
    (H_1, H_2): & (0, 0), &  (1, 0), & (2, 0) &  (0, 1), &  (1, 1), &  (2, 1), &  (0, 2), &  (1, 2), &  (2, 2)\\
    comp. \#: & 1 & 7 & 8 & 4 & ? & ? & 5 & ? & ?
    \end{matrix}
\end{equation*}

At this point the labeling of the values of hidden variables is fixed. Now let us consider an index of hamming weight 2, say $(1, 1)$. We know that the component, that corresponds to this state of latent variables, differs from the component $4$, that corresponds to $(0, 1)$, only due to the change of $H_1$. Hence, the component that corresponds to $(1, 1)$ is in the set $\{2, 4, 6\}$. At the same time, we know that it differs from the component $7$ that corresponds to $(1, 0)$ only due to the change of $H_2$. Hence, the desired component is in the set $\{3, 6, 7\}$. By taking the intersection of sets $\{2, 4, 6\}$ and $\{3, 6, 7\}$ we deduce that the value that corresponds to $(1, 1)$ is 6. Similarly we can determine the rest of the values.

\begin{equation*}
\begin{matrix}
    (H_1, H_2): & (0, 0), &  (1, 0), & (2, 0) &  (0, 1), &  (1, 1), &  (2, 1), &  (0, 2), &  (1, 2), &  (2, 2)\\
    comp. \#: & 1 & 7 & 8 & 4 & 6 & 2 & 5 & 3 & 9
    \end{matrix}
\end{equation*}

\end{example}

\section{Implementation details}
\label{sec:alg}

The results in Section~\ref{sec:exact} assume access to the mixture oracle $\oracle(S)$. Of course, in practice, learning mixture models is a nontrivial problem. Fortunately, many algorithms exist for approximating this oracle:
In our implementation, we used $K$-means.
A na\"ive application of clustering algorithms, however, ignores the significant structure \emph{between} different subsets of observed variables. Thus, we also enforce internal consistency amongst these computations, which makes estimation much more robust in practice. In the remainder of this section, we describe the details of these computations; a complete outline of the entire pipeline can be
found in Appendix~\ref{app:pipeline}.

\paragraph{Estimating the number of marginal components}

In order to estimate the number of components in a marginal distribution for a subset $S$ of observed variables with $|S|\leq 3$, we use 
$K$-means combined with agglomerative clustering to merge nearby cluster centers,
and then select the number of components that has the highest silhouette score. Done independently, this step ignores the structure of the global mixture, and is not robust. In order to make learning more robust we observe that the assumptions on the distribution imply the following properties:
\begin{itemize}
    \item \textit{Divisibility condition:} The number of components we expect to observe over a set $S$ of observed variables is divisible by a number of components we observe on the subset $S'\subset S$ of observed variables (see Obs.~\ref{obs:counting}).
    \item \textit{Structure of means:} Observe that the projections of the means of mixture clusters in the marginal distribution over $S$ are the same as the means of mixture components over variables $S'$ for every $S'\subseteq S$. Hence, if we learn the mixture models over $S$ and $S'$ with the correct numbers of components $k(S)$ and $k(S')$, we expect the projections to be close. 
\end{itemize}

\begin{example}
Suppose we are confident that the number of components in the mixture over $X_1$ is in the set $\{6, 7, 8\}$, over $X_2$ is in $\{4, 5, 6\}$ and the number of components in the mixture over $\{X_1, X_2\}$ is in the set $\{20, 21, 22, 23, 24, 25, 26\}$. Using divisibility condition between $X_1$ and $\{X_1, X_2\}$ we may shrink the set of candidates to $\{21, 24\}$. Next using the divisibility condition for $X_2$ and $\{X_1, X_2\}$ we may determine that the number of components should be $24$.
\end{example}

With these observations in mind, we use a weighted voting procedure, where every set $S$ votes for the number of components in every superset and every subset based on divisibility or means alignment. We then  predict the true number of components by picking the candidate with the most votes.

\paragraph{Constructing $L$}

In order to estimate $L$ from samples we learn the mixture over the entire set of variables (using K-means and the number of components predicted on the previous step) and over each variable separately (again, using previous step). After this we project the mean of each component to a space over which $X_i$ is defined and pick the closest mean in $L_2$ distance (see Figure~\ref{fig:example:PHlearning}).

\paragraph{Reconstructing the latent graphical model}

Once we obtain the joint probability table of $H$, the final piece is to learn the latent DAG $\Lambda$ on $H$. This is a standard problem of learning the causal structure among $m$ discrete variables given samples from their joint distribution. For this a multitude of approaches have been proposed in the literature, for instance the PC algorithm \citep{spirtes1991} or the GES algorithm \cite{chickering2002optimal}. In our experiments, we use the Fast Greedy Equivalence Search \cite{ramsey2017million} with the discrete BIC score, without assuming faithfulness. The final graph $G$ is therefore obtained from $\Gamma$ and $\Lambda$.

\section{Experiments}
\label{sec:expts}

We implemented these algorithms in an end-to-end pipeline that inputs observed data and outputs an estimate of the causal graph $\gr$ and an estimate for the joint probability table $\prob(H)$.
To test this pipeline, we ran experiments on synthetic data. 
Full details about these experiments, including a detailed description of the entire pipeline, can be found in \cref{sec: expt_details}.

\paragraph{Data generation} We start with a causal DAG $G$ generated from the Erd\"{o}s-R\'{e}nyi model, for different settings of $m, n$ and $|\Omega_i|$. We then generate samples from the probability distribution that corresponds to $G$. We take each mixture component to be a Gaussian distribution with random mean and covariance (we do not force mixture components to be well-separated, aside from constraining the covariances to be small). Additionally, we do not impose restrictions on the weights of the components, which may be very small. As a result, it is common to have highly unbalanced clusters (e.g. we may have less than $30$ points in one component and over $1000$ in another). 
Figure~\ref{fig:shd} reports the results of $600$ simulations; $300$ each for $N=10000$ samples and $N=15000$ samples.

\paragraph{Results}
To compare how well our model recovers the underlying DAG, we compute the Structural Hamming Distance (SHD) between our estimated DAG and the true DAG. Since GES returns a CPDAG instead of a DAG, we also report the number of correct but unoriented edges in the estimated DAG. 
The average SHD across different problems sizes ranged from zero to $1.33$. The highest SHD for any single run was $6$. For context, the simulated DAGs had between $3$ and $25$ edges.
Note that any errors are entirely due to estimation error in the $K$-means implementation of $\oracle$, which we expect can be improved significantly. \camadd{In the supplement we also report on experiments with much smaller sample size $N = 1000$ (\cref{fig:shd-2})}. 
These results indicate that the proposed pipeline is surprisingly effective at recovering the causal graph.

\begin{figure}[t]
    \centering
    \includegraphics[width=\textwidth, height=\textheight, keepaspectratio]{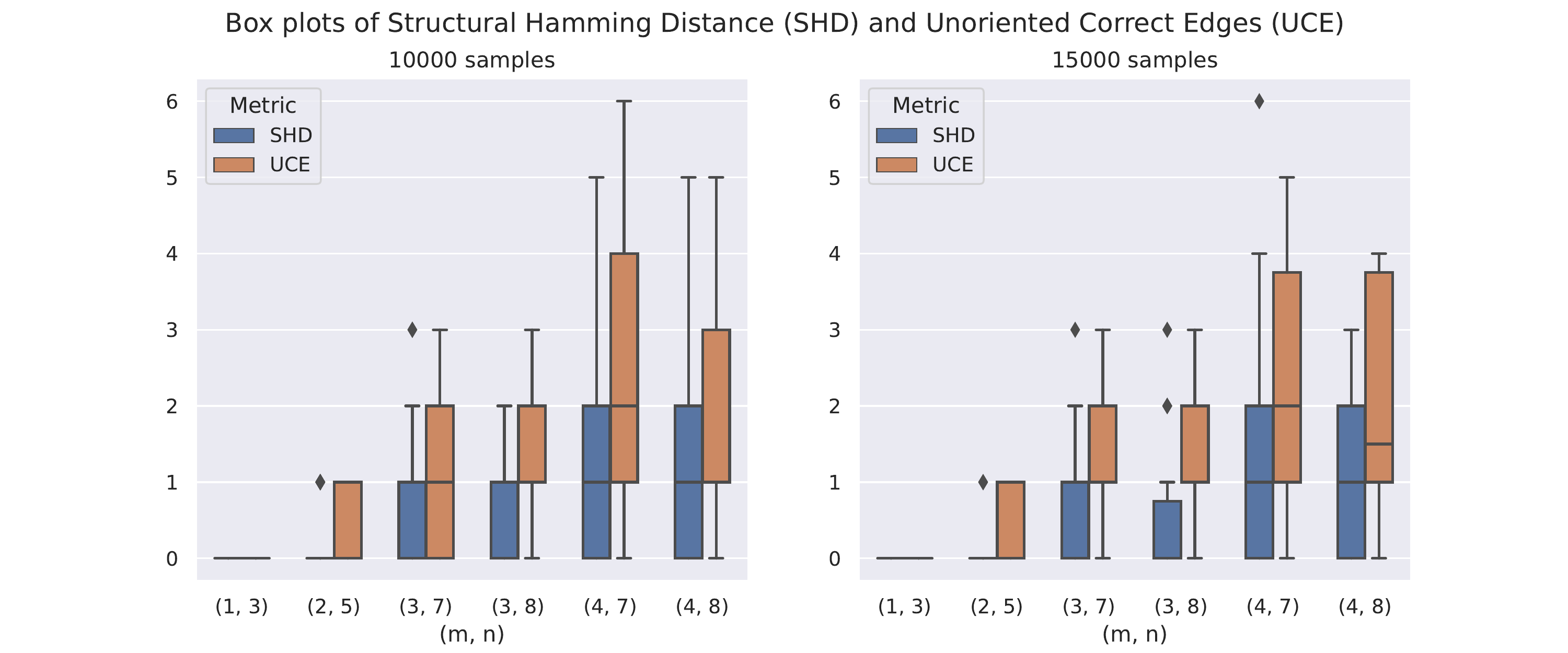}
    \caption{Average Structural Hamming distance for recovery of $\gr$, where $m=|\lat|$ and $n=|\obs|$.}
    \label{fig:shd}
\end{figure}

\section{Discussion}

\camadd{In this paper, we established general conditions under which the latent causal model $\gr$ is identifiable (Theorem~\ref{thm:main}). We show that these conditions are essentially necessary, and mostly amount to non-degeneracy conditions on the joint distribution. 
Under a linear independence condition on columns of the bipartite adjacency matrix of $\Gamma$, we propose a polynomial time algorithm for recovering $\Gamma$ and $\prob(H)$. Our algorithms work by reduction to the mixture oracle, which exists whenever the mixture model over $X$, naturally induced by discrete latent variables, is identifiable. Experimental results show effectiveness of our approach. Even though identifiability of mixture models is a long-studied problem, a good mixture oracle implementation is a bottleneck for scalability of our approach. We believe that it may be improved significantly, and consider this as a promising future direction. In this paper, we work under the measurement model that does not allow direct causal relationships between observed variables. We believe that this condition may be relaxed and are eager to explore this direction in future work. }  

\section{Acknowledgements}

G.R. thanks Aravindan Vijayaraghavan for pointers to useful references. \camadd{B.K. was partially supported by advisor L\'aszl\'o Babai's NSF grant  CCF 1718902. 
G.R. was partially supported by NSF grant CCF-1816372. 
P.R. was supported by NSF IIS-1955532.}
B.A. was supported by NSF IIS-1956330, NIH R01GM140467, and the Robert H. Topel Faculty Research Fund at the University of Chicago Booth School of Business.

\bibliography{latentdag}
\bibliographystyle{alpha}

\clearpage
\appendix

\section{Non-identifiability if Assumption~\ref{assump:strong} is violated}\label{app:example-nonidentifiable}

In this appendix we are going to show that Assumptions~\ref{assm:twins} and~\ref{assm:minimal} on the graph $G$ are not sufficient for identifiability, and therefore additional assumptions on the distribution of $H$ over $\Hdom$ are required as well.

\begin{definition}
For distributions $D_1, D_2$, let $D_1\otimes D_2$ denote the product of the distributions $D_1$ and $D_2$.
\end{definition}
That is, if $X\sim D_1$ and  $Y\sim D_2$ are independent, then their joint distribution is $D_1\otimes D_2$.

The following example illustrates an important case of non-identifiability and motivates the need for Assumption~\ref{assump:strong}.

\begin{example}
Let $N_0, N_1, N_0', N_1'$ be independent Gaussian distributions with distinct parameters (means and variances).  Consider  \begin{equation}\label{eq:non-iden-ex-app}
    (X_1, X_2)\sim \dfrac{1}{2}N_0\otimes N_0'+\dfrac{1}{4}N_1 \otimes  N_0'+\dfrac{1}{4}N_1\otimes  N_1'
\end{equation}
We claim that $(X_1, X_2)$ is consistent with (i.e., satisfies Markov property with respect to) each of the following three models below. Here, in the model $S_3$ the hidden variable $H_1$ can take three values $\{0, 1, 2\}$, and in models $A$ and $B$, hidden variables take values in $\{0, 1\}$.

\begin{center}
\begin{tikzpicture}[
            > = stealth, 
            shorten > = 1pt, 
            auto,
            node distance = 2cm, 
            semithick 
        ]

        \tikzstyle{every state}=[
            draw = black,
            thick,
            fill = white,
            minimum size = 4mm
        ]
        \node[state] (h1) {$H_1$};
        \node[state] (x1) [below of=h1] {$X_1$};
        \node[state] (h2) [right of=h1] {$H_2$};
        \node[state] (x2) [below  of=h2] {$X_2$};

        \path[->] (h1) edge (x1);
        \path[->] (h2) edge (x2);
        \path[->] (h1) edge (x2);
        
         \node [below right=.5cm, align=left,text width=8cm] at (x1)
        {
            Model $A$
        };

        \node[state] (hh1) [right of=h2]{$H_1$};
        \node[state] (xx1) [below of=hh1] {$X_1$};
        \node[state] (hh2) [right of=hh1] {$H_2$};
        \node[state] (xx2) [below  of=hh2] {$X_2$};
        \path[->] (hh1) edge (hh2);
        \path[->] (hh1) edge (xx1);
        \path[->] (hh2) edge (xx2);
        
         \node [below right=.5cm, align=left,text width=8cm] at (xx1)
        {
            Model $B$
        };
        
        \node[state] (x32) [left  of=x1] {$X_2$};
        \node[state] (x22) [left of= x32] {$X_1$};
        \node[state] (h12) [above of=x22] {$H_1$};
        
        \path[->] (h12) edge (x22);
        \path[->] (h12) edge (x32);

         \node [below right=0.5cm, align=left,text width=8cm] at (x22)
        {
            Model $S_3$
        };
\end{tikzpicture}
\end{center}

Note that all these models satisfy ``no-twins'' Assumption~\ref{assm:twins} and minimality Assumption~\ref{assm:minimal}, while Assumptions~\ref{assump:strong} are violated by models $A$ and $B$. 

\begin{enumerate}
    \item Consistency with $S_3$. Let $H_1$ be a random variable that takes values $0, 1, 2$ with probabilities $(1/2, 1/4, 1/4)$. Then
    \[ \begin{gathered}
    (X_1, X_2)\sim \sum\limits_{j\in \{0, 1, 2\}}\prob(X|H_1 = j)\prob(H_1 = j), \text{ where} \\
    \prob(X|H_1 = 0) = N_0\otimes N_0', \quad \prob(X|H_1 = 1) = N_1\otimes N_0', \quad \prob(X|H_1 = 2) = N_1\otimes N_1'
    \end{gathered}\]
    \item Consistency with $A$. Let $H_1$ and $H_2$ be i.i.d random variables that take values $0, 1$ with probabilities $(1/2, 1/2)$. Then
    \[ \begin{gathered}
    (X_1, X_2)\sim \sum\limits_{i\in \{0, 1\}}\sum\limits_{j\in \{0, 1\}}\prob(X|H_1 = i, H_2 = j)\prob(H_1 = i)\prob(H_2 = j), \text{ where} \\
    \prob(X|H_1 = 0) = N_0\otimes N_0', \quad \prob(X|H_1 = 1, H_2 = 0) = N_1\otimes N_0',\\ \prob(X|H_1 = 1, H_2 = 1) = N_1\otimes N_1'
    \end{gathered}\]
    \item Consistency with $B$. Let $H_1$ be a random variable that takes values ${0, 1}$ with probabilities $(1/2, 1/2)$. Let $H_2$ be a dependent random variable that takes values $0, 1$ with probabilities $(1, 0)$, if $H_1 = 0$, and with probabilities $(1/2, 1/2)$, if $H_1 = 1$.
    \[ \begin{gathered}
    (X_1, X_2)\sim \sum\limits_{i\in \{0, 1\}}\sum\limits_{j\in \{0, 1\}}\prob(X_1|H_1 = i)\prob(X_2|H_2 = j)\prob(H_1 = i)\prob(H_2 = j|H_1 = i), 
    \end{gathered}
\]
where
\[
\begin{gathered}
    \prob(X_1|H_1 = 0) = N_0, \quad \prob(X_1|H_1 = 1) = N_1,\\ 
    \prob(X_2|H_2 = 0) = N_0', \quad \prob(X_2|H_2 = 1) = N_1'
    \end{gathered}\]
\end{enumerate}

\end{example}

\begin{remark}
Observe that among the models $A, B$ and $S_3$, only $S_3$ satisfies Assumption~\ref{assump:strong}. Observe that the model $A$ satisfies part \ref{assump:strong:NZ}, but not \ref{assump:strong:SDC}, and the model $B$ satisfies part \ref{assump:strong:SDC}, but not \ref{assump:strong:NZ}, of Assumption~\ref{assump:strong}. This shows that only one of these assumptions is still not sufficient for identifiability of a latent causal model. 
\end{remark}

\section{Reconstructing bipartite part $\Gamma$. Proofs for Sections~\ref{sec:bipartite}}\label{app:bipartite}

Recall that (cf. Section~\ref{sec:recovery-ideas}), that for $w(H_i) = \log(\dim(H_i))$ and every subset $S\subseteq X$ the parameters of the latent DAG satisfy 
\begin{equation}
    \log(k(S)) = \sum\limits_{H_i\in \pa(S)} w(H_i).
\end{equation}
Recall also the definitions of $\com$ and $\comW$ in \eqref{eq:sne:Wsne}, reproduced here for ease of reference:
\begin{align*}
    \com_{\Gamma}(S) = \bigcap\limits_{x\in S} \nbhd_{\Gamma}(x) \quad \text{and}\quad  \comW_{\Gamma}(S) = \sum\limits_{v\in \com_{\Gamma}(S)} w(v).
\end{align*}

\subsection{Learning a bipartite graph with a hidden part from an additive score}\label{app:weighted-bipartite-learning}

We start our discussion of the proof of results in Section~\ref{sec:bipartite} by reducing learning of the causal graph $\Gamma$ to a more general learning problem.

Let $\Gamma = (X\cup H, E)$ be a  (not necessarily directed) bipartite graph on parts $X$ and $H$, and let $w:H\rightarrow (0, \infty)$ be an arbitrary function that defines weights of variables in $H$.

Recall that for a weight function $w$ and subset $S\subseteq X$ we define
\begin{equation}\label{eq:score-comp-app}
    W_{\Gamma}(S) = \sum\limits_{v \in \nbhd_{\Gamma}(S)} w(v)
\end{equation}

\begin{problem}\label{bip-problem}
Assume that the vertices in $H$ and the weight function $w$ are unknown. 
\item[]\quad  \textbf{Input:} Values $(W_{\Gamma}(S)\mid S\in \mathcal{F})$ indexed by a family of known subsets $\mathcal{F}\subseteq 2^{X}$
\item[]\quad \textbf{Goal:} Reconstruct the number of unknown vertices $H$, the graph $\Gamma$ between $H$ and $X$ (up to an isomorphism), and the weight function $w$ from the input. 
\end{problem}

Whether it is possible to reconstruct $\Gamma$ and $w$ from the input may depend on the family $\mathcal{F}$ or some additional assumptions about the structure of the graph $\Gamma$. To account for weights $w$, we slightly modify Definition~\ref{def:k-recoveroble} as follows:
\begin{definition}
We say that $(\Gamma, w)$ is $\mathcal{F}$-recoverable if $(\Gamma, w)$ can be uniquely recovered from $X$ and the sequence $(W_{\Gamma}(S)\mid S\in \mathcal{F})$.
\end{definition}

In the sequel, we use this modified definition.

The most natural regime is when $\mathcal{F}$ contains the sets whose size is bounded:

\begin{definition}
We say that $(\Gamma, w)$ is $t$-recoverable if $(\Gamma, w)$ is $\binom{X}{\leq t}$-recoverable, where $\binom{X}{\leq t}$ denotes the collection of subsets of $X$ of size at most $t$.
\end{definition}

\subsection{Reconstructing $\Gamma$ with full information about $W$}\label{sec:bip-recovery-full-app}

In this section we study Problem~\ref{bip-problem}, when full information about $W_\Gamma(\cdot)$ is provided, i.e. $\calF = 2^X$.

 Although the algorithm considered here will have exponential in $|X|$ runtime, it sheds light on the minimal theoretical assumptions we need for proving identifiability of $\Gam$. We will consider more efficient algorithms in later sections.

We start by proving Observation~\ref{obs:twins-gamma-norecovery}, which notes that if $\nbhd_{\Gamma}(H_i) = \nbhd_{\Gamma}(H_j)$ for $H_i\ne H_j$, then $(\Gamma, w)$ is not $2^{X}$-recoverable.  
\begin{proof}[Proof of Observation~\ref{obs:twins-gamma-norecovery}]
Consider the graph $\Gamma'$ obtained from $\Gamma$ by replacing $H_1$ and $H_2$ with a single variable $H^*$ and by connecting $H^*$ by an edge to all vertices in $X$ that are adjacent with $H_1$ or $H_2$ in $\Gamma$. Define $w(H^*) = w(H_1)+w(H_2)$. Then $W_{\Gamma}(S) = W_{\Gamma'}(S)$ for any $S\subseteq X$. 
\end{proof}

\begin{corollary}
Let $\mathcal{F}\subseteq 2^X$. If there is a pair of distinct variables $H_i, H_j\in H$ such that $\nbhd_{\Gamma}(H_1) = \nbhd_{\Gamma}(H_2)$, then $(\Gamma, w)$ is not $\mathcal{F}$-recoverable.
\end{corollary}

We now prove that in the case $\mathcal{F} = 2^X$, this is the only obstacle. We start by showing that certain neighborhoods of hidden variables can be identified using $\comW(\cdot)$. 

As explained in Section~\ref{sec:recovery-ideas}, in the case when $\nbhd(H_i)\not\subset \nbhd(H_j)$ for all $H_j$, we expect $\comW(\cdot)$ to have a clear ``signature'' of $H_i$. We make this intuition precise in the definition and lemma that follows.

\begin{definition}
We say that a set $S$ of observed variables $X$ is a \emph{maximal neighborhood block} if $\comW(S)\neq 0$, but for any superset $S'$ of $S$ we have $\comW(S') = 0$. 
\end{definition}

\begin{lemma}\label{lem:max-nb-blocks-app}
A set $S\subseteq X$ is a maximal neighborhood block if and only if there exists a hidden vertex $H_i\in H$ such that $\nbhd_{\Gamma}(H_i) = S$ and for any other $H_j\in H$ we have $S\not\subseteq \nbhd_{\Gamma}(H_j)$.
\end{lemma}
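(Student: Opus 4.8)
The plan is to translate the analytic condition defining a maximal neighborhood block into a purely set-theoretic statement about neighborhoods, and then read off both implications. Two preliminary observations do most of the work. First, since the weight function takes values in $(0,\infty)$, a sum of such weights vanishes only over the empty index set, so $\comW_{\Gamma}(S)\neq 0$ if and only if $\com_{\Gamma}(S)\neq\emptyset$. Second, because $\Gamma$ is bipartite (adjacency is symmetric), a hidden vertex lies in $\com_{\Gamma}(S)=\bigcap_{x\in S}\nbhd_{\Gamma}(x)$ exactly when it is adjacent to every $x\in S$, i.e. $\com_{\Gamma}(S)=\{\,H_\ell\in H : S\subseteq\nbhd_{\Gamma}(H_\ell)\,\}$. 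I will also use the monotonicity fact that $S\subseteq S'$ implies $\com_{\Gamma}(S')\subseteq\com_{\Gamma}(S)$.

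For the direction ($\Leftarrow$), suppose there is $H_i$ with $\nbhd_{\Gamma}(H_i)=S$ and $S\not\subseteq\nbhd_{\Gamma}(H_j)$ for every $H_j\neq H_i$. The second observation gives $\com_{\Gamma}(S)=\{H_i\}$, so $\comW_{\Gamma}(S)=w(H_i)>0$. For any proper superset $S'\supsetneq S$, monotonicity yields $\com_{\Gamma}(S')\subseteq\{H_i\}$; but $S'\not\subseteq S=\nbhd_{\Gamma}(H_i)$ forces $H_i\notin\com_{\Gamma}(S')$, hence $\com_{\Gamma}(S')=\emptyset$ and $\comW_{\Gamma}(S')=0$. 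Thus $S$ is a maximal neighborhood block.

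For the direction ($\Rightarrow$), suppose $S$ is a maximal neighborhood block. Since $\comW_{\Gamma}(S)\neq 0$, there is some $H_i$ with $S\subseteq\nbhd_{\Gamma}(H_i)$. I claim $\nbhd_{\Gamma}(H_i)=S$: otherwise $S':=\nbhd_{\Gamma}(H_i)\supsetneq S$ would satisfy $H_i\in\com_{\Gamma}(S')$, giving $\comW_{\Gamma}(S')\neq 0$ and contradicting maximality. To obtain uniqueness, I invoke the no-twins hypothesis: if some other $H_j$ had $S\subseteq\nbhd_{\Gamma}(H_j)$, the identical argument would force $\nbhd_{\Gamma}(H_j)=S=\nbhd_{\Gamma}(H_i)$, making $H_i,H_j$ twins and contradicting Assumption~\ref{assm:twins}. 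Hence $S\not\subseteq\nbhd_{\Gamma}(H_j)$ for every $H_j\neq H_i$, which is exactly the claimed condition.

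The only subtle point—and the step I expect to be the crux—is the final move in the forward direction: the set-theoretic characterization alone only establishes that \emph{every} element of $\com_{\Gamma}(S)$ has neighborhood exactly $S$, and upgrading this to \emph{uniqueness} of $H_i$ genuinely requires the no-twins assumption (which Observation~\ref{obs:twins-gamma-norecovery} shows is unavoidable, since twins with common neighborhood $S$ make $S$ a maximal block while no single distinguished $H_i$ exists). Everything else is routine bookkeeping with the positivity of $w$ and the monotonicity of $\com_{\Gamma}$.
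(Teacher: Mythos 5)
Your proof is correct and follows essentially the same route as the paper's: show that every element of $\com_{\Gamma}(S)$ has neighborhood contained in (hence equal to) $S$ using the positivity of $w$ and maximality, and verify the converse via the monotonicity of $\com_{\Gamma}$ under taking supersets. You are in fact slightly more careful than the paper, which only sketches the reverse implication and leaves implicit the appeal to the no-twins assumption that you correctly identify as necessary for the uniqueness of $H_i$.
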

\begin{proof}
Assume that $S\subseteq X$ is a maximal neighborhood block. Since $\comW(S)>0$ the set of common neighbours $\com_{\Gamma}(S)$ is non-empty. If $\com_{\Gamma}(S)$ contains a hidden vertex $H_{j}$ that is connected to a vertex $x\notin S$ then, $H_{j}\in \com_{\Gamma}(S\cup \{x\})$, and $\comW_{\Gamma}(S\cup \{x\})\geq w(H_j) >0$ which contradicts the assumption that $S$ is a maximal neighborhood block. Therefore, for every $H_{j}$ in $\com_{\Gamma}(S)$, we have $\nbhd_{\Gamma}(H_j) \subset S$. Therefore, there exists a variable $H_i$ such that $\nbhd_{\Gamma}(H_i) = S$ and for any other $H_j\in H$ we have $S\not\subseteq \nbhd_{\Gamma}(H_j)$.

The opposite implication can be verified in a similar way.
\end{proof}

\begin{theorem}[Theorem~\ref{thm:hidden-bipartite}, part~\ref{thm:hidden-bipartite:exp}]\label{thm:hidden-bipartite-app}
Let $\Gamma$ be a bipartite graph with parts $X$ and $H$. Assume that no pair of vertices in $H$ has the same set of neighbours (in $X$). Then $\Gamma$ is $2^{X}$-recoverable.  
\end{theorem}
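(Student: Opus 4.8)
The plan is to reconstruct $(\Gamma, w)$ by an iterative peeling procedure that repeatedly identifies the hidden vertices whose neighborhoods are inclusion-maximal, records them, and then subtracts off their contribution to the score. The starting point is that the full input $(W_{\Gamma}(S)\mid S\subseteq X)$ determines $\comW_{\Gamma}(S)$ for every $S\subseteq X$: this is exactly the inclusion--exclusion identity of Lemma~\ref{lem:com-weight}, whose right-hand side only involves values of $W_{\Gamma}$ on subsets of $S$. Thus without loss of generality I may assume access to $\comW_{\Gamma}(\cdot)$ on all of $2^{X}$.

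Next I would locate hidden vertices via Lemma~\ref{lem:max-nb-blocks-app}: the maximal neighborhood blocks---sets $S$ with $\comW_{\Gamma}(S)\neq 0$ but $\comW_{\Gamma}(S')=0$ for every proper superset $S'$---are computable directly from $\comW_{\Gamma}$, and each such block equals $\nbhd_{\Gamma}(H_i)$ for some hidden vertex $H_i$ whose neighborhood is contained in no other hidden vertex's neighborhood. Under the no-twins hypothesis the common neighborhood $\com_{\Gamma}(S)$ is in fact the singleton $\{H_i\}$: any $H_j\in\com_{\Gamma}(S)$ satisfies $S\subseteq\nbhd_{\Gamma}(H_j)$ and, by the lemma, $\nbhd_{\Gamma}(H_j)\subseteq S$, so $\nbhd_{\Gamma}(H_j)=S=\nbhd_{\Gamma}(H_i)$, forcing $H_j=H_i$. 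Hence each maximal neighborhood block reveals both the identity of a hidden vertex, through its neighborhood $S$, and its weight $w(H_i)=\comW_{\Gamma}(S)$.

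This only uncovers the inclusion-maximal hidden vertices, so the crux is a recursion that peels them away. Having identified vertices $H_{i_1},\dots,H_{i_r}$ with neighborhoods $S_1,\dots,S_r$ and weights $w_1,\dots,w_r$, I would pass to the residual graph $\Gamma'$ obtained by deleting these vertices, whose score is computable from known quantities via
\[
W_{\Gamma'}(S) = W_{\Gamma}(S) - \sum_{l \,:\, S_l\cap S\neq\emptyset} w_l,
\]
since $H_l\in\nbhd_{\Gamma}(S)$ precisely when $S_l\cap S\neq\emptyset$. Deleting hidden vertices does not alter the neighborhoods of the remaining ones, so $\Gamma'$ still satisfies the no-twins hypothesis, and the procedure can be reapplied to $W_{\Gamma'}$ to extract the next layer. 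Because the weights are strictly positive, the inclusion-maximal elements of the current neighborhood poset are always maximal neighborhood blocks, so each round removes at least one hidden vertex; as $H$ is finite, the recursion terminates, at which point $W_{\Gamma'}\equiv 0$ and all of $H$, $\Gamma$, and $w$ have been recovered.

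Finally, for uniqueness I would observe that every choice made above---the set of maximal neighborhood blocks, the associated vertices and weights, and the residual scores---is a deterministic function of the input values $W_{\Gamma}(\cdot)$. Consequently any two pairs $(\Gamma,w)$ and $(\Gamma',w')$ satisfying the no-twins hypothesis and inducing identical scores on $2^{X}$ produce the same reconstruction, hence agree up to relabeling of the hidden part; this is precisely $2^{X}$-recoverability. I expect the main obstacle to be exactly the nested-neighborhood phenomenon: Lemma~\ref{lem:max-nb-blocks-app} alone exposes only the top layer, so the bookkeeping in the peeling subtraction---verifying that the residual score genuinely corresponds to the vertex-deleted subgraph and that the no-twins property is preserved at every stage---is the step that needs the most care.
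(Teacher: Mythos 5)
Your proposal is correct and follows essentially the same route as the paper's proof: both compute $\comW_{\Gamma}$ via Lemma~\ref{lem:com-weight}, identify an inclusion-maximal hidden vertex and its weight through a maximal neighborhood block (Lemma~\ref{lem:max-nb-blocks-app}), subtract its contribution from $W_{\Gamma}$ to obtain the score of the vertex-deleted subgraph, and recurse, noting that the no-twins property is preserved. The only cosmetic difference is that you peel an entire layer of inclusion-maximal vertices at once while the paper inducts on $|H|$ removing one vertex per step.
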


\begin{proof}
We prove the claim of the theorem by induction on $|H|$. The statement for the base case $|H| = 0$ immediately follows from the fact that $W(S) = 0$ for all $v\in X$ if and only if $|H| = 0$ since $w(\cdot)>0$. Assume that we proved the claim for all $\Gamma$ with $|H| = t$ that satisfy the assumptions of the theorem. Let $\Gamma$ be a graph with $|H| = t+1$ that satisfies the assumptions of the theorem. 

Using Lemma~\ref{lem:com-weight}, compute values $\comW_{\Gamma}(S)$ for every $S\subseteq X$. Using values of $\comW(\cdot)$ we can find a maximal neighborhood block $Y\subseteq X$. By Lemma~\ref{lem:max-nb-blocks-app}, there exists a hidden vertex $H_i$ such that $\{H_i\} = \com_{\Gamma}(Y)$. Note that $w(H_i) = \comW(Y)$. 

 Denote by $\Gamma'$ the graph obtained from $\Gamma$ by deleting $H_i$. 

Now we verify that $\Gamma'$ satisfies the assumptions of the theorem. There is nothing to check if the set of hidden vertices of $\Gamma'$ is empty. Assume that $\Gamma'$ has a non-empty set of hidden vertices. First, note that all hidden vertices in $\Gamma'$ still  have distinct sets of neighbors. Second, note that (cf. \eqref{eq:score-comp-app}) $W_{\Gamma'}(S) = W_{\Gamma}(S)$ if $S\cap Y = \emptyset$ (i.e. $H_i\notin \nbhd_{\Gamma}(S)$), and 
\[ W_{\Gamma'}(S) = W_{\Gamma}(S) - w(H_i) = W_{\Gamma}(S) - \comW_{\Gamma}(Y)\]
if $S\cap Y$ is not empty. Thus, we can compute $W_{\Gamma'}$ from the values of $W_{\Gamma}$.

By the induction hypothesis $(\Gamma', w|_{\Gamma'})$ is uniquely recoverable from $W_{\Gamma'}(S)$. Let $\Gamma^*$ be the graph obtained from $\Gamma'$ by adding a new variable $H_Y$ of weight $\comW_{\Gamma}(Y)$ and edges between $H_Y$ and $Y$.  Then $\Gamma^*$ is isomorphic to $\Gamma$, and so $\Gamma$ is $2^X$-recoverable.
\end{proof}

\subsection{Efficient $t$-recovery of $\Gamma$ for $t\geq 3$}\label{sec:bip-from-tensor-app}

The approach proposed in Appendix~\ref{sec:bip-recovery-full-app} is exponential in the number of observed variables in the worst case, since we need to compute the scores of all subsets of $X$. In this section, we show that with a mild additional assumption, there is an efficient algorithm to learn the bipartite graph between hidden and observed variables.

As before, let $\Gamma = (X\cup H, E)$ be the bipartite graph between hidden and observed variables. 

Recall, that we defined $A$ to be the $|X|\times |H|$ adjacency matrix of $\Gamma$ (with $0, 1$ entries) and $a_i$ to denote the $i$-th column of $A$.

For a sequence of indices $I = (i_1, i_2, \ldots, i_t)\subseteq [n]$ define
\begin{equation}
    \comW_{\Gamma}(I) = \sum\limits_{j\in H} w(j)\underbrace{(a_j)_{i_1}(a_j)_{i_2}\ldots (a_j)_{i_t}}_{t} = \Big(\sum\limits_{j\in H} w(j) \underbrace{a_j\otimes a_j \otimes \ldots \otimes a_j}_{t}\Big)_{(I)}.
\end{equation}

Recall, that as pointed out in Remark~\ref{rem:compute:scores},  for any $S\subseteq X$ with $|S|\leq t$ the value $\comW_{\Gamma}(S)$ can be computed from the $\{W_{\Gamma}(S)\mid S\subseteq X,\ |S|\leq t\}$ using \cref{lem:com-weight}. Therefore, we can make the following observation.

\begin{observation}\label{obs:wsne-comp}
All entries of the the tensor $M_t = \sum\limits_{j\in H} w(j) (\underbrace{a_j\otimes a_j \otimes \ldots \otimes a_j}_{t})$ can be computed as $M_t(I) = \comW_{\Gamma}(I)$ in $O(2^tn^t)$ time and space assuming access to $\{W_{\Gamma}(S)\mid S\subseteq X,\ |S|\leq t\}$. 
\end{observation}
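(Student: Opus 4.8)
The plan is to observe that the stated identity $M_t(I)=\comW_{\Gamma}(I)$ is essentially definitional, so that the real content of the observation resides in the complexity bound. First I would unwind $M_t$ coordinate-wise: for an index tuple $I=(i_1,\ldots,i_t)$, the definition of $M_t$ gives $M_t(I)=\sum_{j\in H}w(j)(a_j)_{i_1}\cdots(a_j)_{i_t}$, which is exactly the quantity denoted $\comW_{\Gamma}(I)$ in the display preceding the observation. The only subtlety is that $I$ may contain repeated indices, whereas $\comW_{\Gamma}$ was originally introduced on \emph{sets} of observed variables. I would resolve this by noting that each entry $(a_j)_i\in\{0,1\}$ is idempotent, so $(a_j)_i^{k}=(a_j)_i$ for every $k\ge 1$. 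Hence the product $\prod_{\ell}(a_j)_{i_\ell}$ depends only on the set $S=\{i_1,\ldots,i_t\}$ of distinct indices occurring in $I$, and consequently $M_t(I)=\comW_{\Gamma}(S)$. Since $|S|\le t$, this reduces computation of the entire tensor to the family $\{\comW_{\Gamma}(S)\mid |S|\le t\}$.

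Next I would invoke Lemma~\ref{lem:com-weight}, which expresses each $\comW_{\Gamma}(S)$ by inclusion--exclusion as $\sum_{\emptyset\neq U\subseteq S}(-1)^{|U|+1}W_{\Gamma}(U)$. By Remark~\ref{rem:compute:scores}, every term on the right-hand side involves $W_{\Gamma}$ on a subset of size at most $|S|\le t$, and all such values are available by the hypothesis of the observation. Thus a single value $\comW_{\Gamma}(S)$ is assembled from the given inputs in $O(2^{|S|})=O(2^{t})$ arithmetic operations.

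For the complexity bound I would count as follows. There are at most $n^{t}$ index tuples $I$; equivalently, it suffices to precompute $\comW_{\Gamma}(S)$ for the $O(n^{t})$ subsets $S\subseteq X$ with $|S|\le t$ and then fill each tensor entry $M_t(I)$ by a table lookup keyed on the set of distinct indices of $I$. Each such value costs $O(2^{t})$ operations by the previous step, so the total running time is $O(2^{t}n^{t})$; storing the order-$t$ tensor requires $O(n^{t})$ space, and the auxiliary subset sums keep the total within the claimed $O(2^{t}n^{t})$ space.

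Since the argument is definitional combined with a direct application of Lemma~\ref{lem:com-weight}, I do not anticipate a genuine obstacle. The only point requiring care is the treatment of repeated indices in $I$, which the idempotence of the $0/1$ entries of $a_j$ dispatches cleanly, together with the bookkeeping that keeps each subset sum inside the promised $O(2^{t})$ budget.
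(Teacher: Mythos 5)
Your proposal is correct and follows essentially the same route as the paper: the identity $M_t(I)=\comW_{\Gamma}(I)$ is read off from the coordinate-wise definition of the tensor, and each value is then assembled from $\{W_{\Gamma}(S)\mid |S|\le t\}$ via the inclusion--exclusion formula of Lemma~\ref{lem:com-weight}, giving $O(2^t)$ work per entry over $O(n^t)$ entries. Your explicit treatment of repeated indices via idempotence of the $0/1$ entries of $a_j$ is a point the paper handles only implicitly (by extending $\comW_{\Gamma}$ to index tuples by definition), but it is the same argument.
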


For fixed $t$ this is a poly-time computation.
Furthermore, in the settings we consider in Secrion~\ref{sec:bipartite} the values of $W_{\Gamma}$ can be computed from $\oracle$ using Observation~\ref{obs:counting}.

Now we want to recover the vectors $a_j$ from $M_t$. Since $a_j$ are the columns of the adjacency matrix of $\Gamma$ this is equivalent to recovering the adjacency matrix of $\Gamma$ or $\Gamma$ itself up to an isomorphism.

\begin{definition}
For an order-$t$ tensor $M_t$ its rank is defined as the smallest $r$ such that $M_t$ can be written as
\begin{equation}
    M_t = \sum\limits_{j =1}^{r} c_j\bigotimes_{i=1}^{t} x_{j}^{(i)}.
\end{equation}
Such decomposition of $M$ with precisely $r$ components is called a minimum rank decomposition or a CP-decomposition. 
\end{definition}

\begin{lemma}\label{lem:trecovery:reform-app}
If the decomposition 
\[M_t = \sum\limits_{j\in H} w(j)\underbrace{a_j\otimes a_j \otimes \ldots \otimes a_j}_{t}\]
is the unique minimum rank decomposition, then $(\Gamma, w)$ is $t$-recoverable.
\end{lemma}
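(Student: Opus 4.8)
The plan is to turn the recoverability statement into a tensor-extraction argument driven by the uniqueness hypothesis. We are given that the symmetric decomposition $M_t = \sum_{j\in H} w(j)\, a_j^{\otimes t}$ is \emph{the} unique minimum rank decomposition of $M_t$. The first step is to justify working with $M_t$ at all: by Observation~\ref{obs:wsne-comp} (which rests on Lemma~\ref{lem:com-weight}), every entry $M_t(I) = \comW_{\Gamma}(I)$ is an explicit signed sum of the values $W_{\Gamma}(S)$ over subsets $S$ of the index set $I$, all of size $\le t$. Hence the entire tensor $M_t$ is computable from the input $(W_{\Gamma}(S)\mid |S|\le t)$, and the whole task reduces to showing that $(\Gamma,w)$ is determined by $M_t$.

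Next I would invoke the uniqueness hypothesis directly. Compute any minimum rank (CP) decomposition of $M_t$; by assumption the resulting set of rank-one summands is, up to reordering, exactly $\{\,w(j)\, a_j^{\otimes t} : j\in H\,\}$. In particular $\rank(M_t)=|H|$, which recovers the number of hidden vertices. The permutation ambiguity inherent in any CP decomposition is harmless, since it corresponds precisely to relabelling the vertices of $H$, i.e. to the isomorphism freedom allowed in the definition of recoverability.

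The key remaining step is to peel off $a_j$ and $w(j)$ from each individual rank-one summand, despite the scaling ambiguity in how a decomposition routine might present its factor vectors. Here I would exploit the $\{0,1\}$ structure of the columns $a_j$. Fix one summand $T$; by uniqueness $T = w(j)\, a_j^{\otimes t}$ for some $j$, so its entries are $T(i_1,\dots,i_t) = w(j)\prod_{\ell} (a_j)_{i_\ell}$, which equals $w(j)$ when every $i_\ell \in \supp(a_j)$ and $0$ otherwise. Consequently the diagonal detects the support, $\supp(a_j) = \{\, i : T(i,i,\dots,i)\neq 0\,\}$, and any nonzero entry returns the weight, $w(j) = T(i,\dots,i)$ for $i\in\supp(a_j)$. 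Setting $a_j$ to be the indicator vector of this support recovers the column exactly; note this also dissolves the sign issue when $t$ is even, since $a_j$ and $-a_j$ share the same support and induce the same rank-one tensor. Reading off $a_j$ for every summand reconstructs the adjacency matrix $\adj$, hence $\Gamma$ up to isomorphism, and simultaneously the weight function $w$, which is exactly $t$-recoverability.

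I expect the main obstacle to be bookkeeping the ambiguities rather than any deep difficulty: one must argue that uniqueness \emph{as a multiset of rank-one tensors}, not merely of the factor vectors, is what makes the support/weight readout well defined. One should also record the harmless edge case that any $H_j$ with $a_j=0$ contributes a zero term and is thus implicitly excluded by the minimum-rank hypothesis (equivalently, each hidden variable is assumed to have at least one observed neighbour), so that all terms surviving in the decomposition are genuine and the count $|H|$ is correct.
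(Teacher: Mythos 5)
Your proposal is correct and follows essentially the same route as the paper: compute $M_t$ from $(W_{\Gamma}(S)\mid |S|\le t)$ via Lemma~\ref{lem:com-weight}/Observation~\ref{obs:wsne-comp}, then invoke uniqueness of the minimum rank decomposition to identify the summands $w(j)\,a_j^{\otimes t}$ up to permutation. The paper's proof stops there; your additional step of reading $\supp(a_j)$ and $w(j)$ off the diagonal of each rank-one summand (using the $\{0,1\}$ structure and $w>0$) is a valid and welcome elaboration of the implicit ``can be uniquely identified'' claim.
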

\begin{proof}
In order to recover $\Gamma$ and $w$ we compute $M_t$ using $\{W_{\Gamma}(S)\mid S\subseteq X,\ |S|\leq t\}$. Then $a_j$ and $w(j)$ can be uniquely (up to permutation) identified from minimum rank decomposition of $M$.
\end{proof}

The following simplified version of Kruskal's condition was proposed by Lovitz and Petrov.

\begin{theorem}[{\cite[Theorem~2]{lovitz2021generalization}}]\label{thm:lovitzpetrov}
Let $m\geq 2$ and $t\geq 3$ be integers. Let $V = V_1\otimes V_2 \otimes \ldots \otimes V_t$ be a multipartite vector space over a field $\mathbb{F}$ and let 
\[ \{x_j^{(1)}\otimes x_j^{(2)}\otimes \ldots \otimes x_j^{(t)}\mid j\in [m]\}\subset V\}\]
be a set of $m$ rank-1 (product) tensors. For a subset $S\subseteq [m]$ with $|S|\geq 2$ and $j\in [t]$ define 
\[ d_i(S) = \dim \Span \{x_{j}^{(i)}\mid j\in S\}.\] 
If $2|S|\leq \sum\limits_{i=1}^{t} (d_i(S)-1)+1$ for every such $S$, then 
\[ \sum\limits_{j\in [m]} x_j^{(1)}\otimes x_j^{(2)}\otimes \ldots \otimes x_j^{(t)}\] 
constitutes a unique minimal rank decomposition.
\end{theorem}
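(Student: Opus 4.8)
The plan is to prove this Kruskal-type uniqueness statement by contradiction. Suppose the given product tensors $X_j := x_j^{(1)}\otimes\cdots\otimes x_j^{(t)}$ admit a second minimal decomposition $T = \sum_{j\in[m]} X_j = \sum_{l\in[r]} Y_l$ into product tensors $Y_l$, with $r = \rank(T)$. I would first record the consequences of the hypothesis on small subsets: for $|S| = 2$ the bound $2|S| \le \sum_{i}(d_i(S) - 1) + 1$ rearranges to $\sum_i (d_i(S)-1) \ge 3$, which (since each $d_i(S)-1\in\{0,1\}$) forces the two terms to have linearly independent factors in at least three modes, already ruling out any pairwise ``collapse'' of two $X_j$'s into a single product tensor. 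Accumulating these bounds over all $S$, the first milestone is to show that the $X_j$ are linearly independent as tensors, so that in particular $r = m$ and no decomposition with fewer terms can exist.

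The core of the argument is a matching/permutation lemma showing that the families $\{X_j\}$ and $\{Y_l\}$ coincide up to reindexing and scaling. My approach would be induction on $m$, peeling off one term at a time. The key device is a \emph{separating functional}: using the subset dimension condition, I would produce, for a carefully chosen term $X_{j_0}$, a tuple of linear functionals $(\phi^{(1)},\dots,\phi^{(t)})$ on the factor spaces such that contracting $T$ against functionals on $t-1$ of the modes annihilates every $X_j$ with $j\neq j_0$ while leaving a nonzero multiple of $x_{j_0}^{(i_0)}$ in the remaining mode. The existence of such functionals is exactly what the inequality $\sum_i (d_i(S)-1) \ge 2|S|-1$ buys: it guarantees that in each relevant subset the factors span a large enough space that one can separate any one term from the span of the others. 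Contracting both decompositions against these functionals then shows $X_{j_0}$ must equal some $Y_{l_0}$ up to a scalar; subtracting it and verifying that the reduced family $\{X_j\}_{j\ne j_0}$ still satisfies the (now smaller) subset condition lets the induction proceed.

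The main obstacle is the permutation lemma itself under the \emph{local}, subset-wise hypothesis rather than a global Kruskal-rank condition. Grouping the $t$ modes into three so as to invoke the classical Kruskal theorem is not available here, because merging modes degrades the refined dimension inequality in an uncontrolled way; one must instead track all $t$ modes simultaneously. Concretely, the delicate part is establishing that whenever a relation $\sum_j \lambda_j X_j = \sum_l \mu_l Y_l$ holds, the dimension condition propagates to the subset of indices actually involved, so that the inductive hypothesis genuinely applies to a smaller instance. Making the choice of $X_{j_0}$ and its separating functionals precise, and checking that removing $X_{j_0}$ preserves the hypothesis for the residual tensor, is where the bulk of the technical work lies.
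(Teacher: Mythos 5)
This statement is not proved in the paper at all: it is quoted verbatim as Theorem~2 of Lovitz and Petrov \cite{lovitz2021generalization} and used as a black box, so there is no in-paper argument to compare against. Judged on its own terms, your sketch has genuine gaps and does not constitute a proof. First, your ``first milestone'' reasoning is flawed: linear independence of the product tensors $X_j$ does \emph{not} imply that $\rank(T)=m$. For example, the rank-one matrices $e_1\otimes e_1$, $e_2\otimes e_2$, $(e_1+e_2)\otimes(e_1+e_2)$ are linearly independent, yet their sum has rank $2$. Establishing that the given decomposition is of minimal rank is itself a substantial part of the theorem and cannot be read off from linear independence of the summands.

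Second, the separating-functional device at the core of your induction is not available under the stated hypothesis. To annihilate every $X_j$ with $j\neq j_0$ by contracting against functionals in $t-1$ modes, you would need, for each such $j$, some mode $i$ in which $\phi^{(i)}(x_j^{(i)})=0$ while $\phi^{(i)}(x_{j_0}^{(i)})\neq 0$; the hypothesis only bounds the \emph{sum} $\sum_i(d_i(S)-1)$ from below, which is much weaker than guaranteeing a mode (or a compatible tuple of functionals) that separates one term from all the others --- indeed the whole point of this theorem relative to the anchor-word-style conditions is that no single mode need be separating. Finally, you explicitly defer the permutation lemma under the local subset-wise hypothesis as ``where the bulk of the technical work lies''; that lemma is precisely the content of the Lovitz--Petrov result (their proof proceeds via a splitting/induction technique that is considerably more delicate than contracting against functionals), so the proposal leaves the essential difficulty unresolved. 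Citing the result, as the paper does, is the appropriate course unless you intend to reproduce that argument in full.
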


In our settings the sufficient condition for having the unique minimal rank decomposition takes the following form.

\begin{corollary}\label{cor:fromKruskal}
Assume that for every $S\subseteq H$ with $|S|\geq 2$ we have
\[ \dim \Span \{a_j \mid j\in S\}\geq \dfrac{2}{t}|S|+1,\]
then the decomposition $M_t = \sum\limits_{j\in H} w(j) \underbrace{a_j\otimes a_j \otimes \ldots \otimes a_j}_{t}$ is the unique minimum rank decomposition and so $(\Gamma, w)$ is $t$-recoverable.
\end{corollary}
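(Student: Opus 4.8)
The plan is to derive this as a direct consequence of the Lovitz--Petrov criterion (Theorem~\ref{thm:lovitzpetrov}) combined with Lemma~\ref{lem:trecovery:reform-app}. Since Lemma~\ref{lem:trecovery:reform-app} already reduces $t$-recoverability to uniqueness of the minimal rank decomposition of $M_t$, the only real work is to put our \emph{symmetric} tensor into the shape demanded by Theorem~\ref{thm:lovitzpetrov} and then check that the stated spanning hypothesis implies that theorem's numerical condition.

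First I would absorb the strictly positive weights into the factors. Because $w(j)>0$ and $t\ge 3$, the positive real $t$-th root is well defined, so setting $\tilde a_j := w(j)^{1/t} a_j$ yields
\[
M_t = \sum_{j\in H} w(j)\,\underbrace{a_j\otimes\cdots\otimes a_j}_{t} = \sum_{j\in H} \underbrace{\tilde a_j\otimes\cdots\otimes\tilde a_j}_{t},
\]
which is exactly of the form $\sum_{j} x_j^{(1)}\otimes\cdots\otimes x_j^{(t)}$ appearing in Theorem~\ref{thm:lovitzpetrov}, with $x_j^{(i)}=\tilde a_j$ for every mode $i\in[t]$. The key point of this reduction is that scaling each $a_j$ by the nonzero scalar $w(j)^{1/t}$ leaves every span unchanged, so for each mode $i$ and each $S\subseteq H$ with $|S|\ge 2$,
\[
d_i(S) = \dim\Span\{\tilde a_j \mid j\in S\} = \dim\Span\{a_j \mid j\in S\} =: d(S).
\]
Thus all $t$ modal dimensions collapse to a single quantity $d(S)$, and the Lovitz--Petrov condition $2|S|\le \sum_{i=1}^{t}(d_i(S)-1)+1$ simplifies to $2|S|\le t\,(d(S)-1)+1$.

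The numerical check is then immediate: the hypothesis $d(S)\ge \frac{2}{t}|S|+1$ gives $t(d(S)-1)\ge 2|S|$, hence $t(d(S)-1)+1\ge 2|S|+1>2|S|$ for every admissible $S$. (In fact the weaker threshold $d(S)\ge \frac{2}{t}|S|+1-\frac1t$ would already suffice, so the slightly cleaner bound stated in the corollary leaves a little slack.) Theorem~\ref{thm:lovitzpetrov} therefore certifies that $\sum_{j\in H}\tilde a_j^{\otimes t}$ is the \emph{unique} minimal rank decomposition of $M_t$.

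Finally I would hand this back to Lemma~\ref{lem:trecovery:reform-app}: uniqueness pins down the rank-one terms $\tilde a_j^{\otimes t}$ up to relabeling, and since each $a_j$ is a $0/1$ vector, the support of $\tilde a_j$ recovers $a_j$ (hence $\Gamma$ up to isomorphism) while the common value of its nonzero entries equals $w(j)^{1/t}$ and so recovers $w(j)$. This establishes $t$-recoverability of $(\Gamma,w)$. There is no deep obstacle here; the one step meriting care is the symmetric reduction, namely verifying that absorbing the weights does not disturb the spans so that the single dimension $d(S)$ genuinely plays the role of all $d_i(S)$ in Theorem~\ref{thm:lovitzpetrov}.
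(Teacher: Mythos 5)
Your proof is correct and takes essentially the same route as the paper: both reduce to the Lovitz--Petrov criterion (Theorem~\ref{thm:lovitzpetrov}) through Lemma~\ref{lem:trecovery:reform-app}, absorb the positive weights into the rank-one factors (the paper puts the whole weight into the first mode via $x_j^{(1)}=w(j)a_j$ while you spread it symmetrically as $w(j)^{1/t}a_j$ --- an immaterial difference, since either rescaling leaves every span $d_i(S)$ equal to $\dim\Span\{a_j\mid j\in S\}$), and then check the numerical condition. Your side remark that the weaker threshold $\dim\Span\{a_j\mid j\in S\}\ge \frac{2}{t}|S|+1-\frac{1}{t}$ already suffices is accurate and worth keeping, since that is the form of the bound actually needed for the subsequent observation that linearly independent columns of $\adj$ satisfy the hypothesis.
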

\begin{proof}
Take $\mathbb{F} = \mathbb{R}$, then the result follows from \ref{thm:lovitzpetrov} for $x_j^{(1)} = w(j)a_j$ and $x_j^{(i)} = a_j$.
\end{proof}

\begin{proof}[Proof of Theorem~\ref{thm:hidden-bipartite} part ~\ref{thm:hidden-bipartite:eff}.] Follows by combining Corollary~\ref{cor:fromKruskal} and Lemma~\ref{lem:trecovery:reform-app}.
\end{proof}

Learning the components of the minimum rank decomposition is a very well-studied problem for which a variety of algorithms have been proposed in the literature (see the survey~\cite{vijayaraghavan2020efficient} or the book~\cite{moitra2014algorithmic}). 
We can use Jennrich's algorithm~\cite{harshman} (see also \cite{vijayaraghavan2020efficient, moitra2014algorithmic} and the references therein)
as an efficient algorithm with guarantees:

\begin{theorem}[Jennrich's algorithm~\cite{harshman}]\label{thm:jennrich}
Assume that the components of the tensor $\mathcal{T} = \sum\limits_{i=1}^{r} a_i\otimes b_i\otimes c_i$ satisfy the following conditions. The vectors $\{a_i \mid i\in [r]\}$ are linearly independent, the vectors $\{b_i \mid i\in [r]\}$ are linearly independent, and no pair of vectors $c_i$, $c_j$ is linearly dependent for $i\neq j$. Then the components of the tensor can be uniquely recovered in $O(n^3)$ space and $O(n^4)$ time. 
\end{theorem}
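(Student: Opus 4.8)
The plan is to prove this via the classical simultaneous-diagonalization argument underlying Jennrich's algorithm. Write $A = [a_1 \mid \cdots \mid a_r]$ and $B = [b_1 \mid \cdots \mid b_r]$ for the $n \times r$ factor matrices; by hypothesis both have full column rank $r$. First I would collapse the third mode of $\mathcal{T}$ by contracting against two fixed vectors $x, y \in \R^n$, producing the $n \times n$ matrix slices
\begin{equation*}
    T_x = \sum_{i=1}^r \langle c_i, x\rangle\, a_i b_i^T = A\, D_x\, B^T, \qquad T_y = A\, D_y\, B^T,
\end{equation*}
where $D_x = \diag(\langle c_1,x\rangle, \ldots, \langle c_r,x\rangle)$ and $D_y$ is defined analogously. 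Each slice is a known matrix computable from $\mathcal{T}$ in $O(n^3)$ time.

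The key step is to form $M = T_x T_y^{+}$, where $(\cdot)^+$ denotes the Moore--Penrose pseudoinverse. Using that $A, B$ have full column rank and that $D_y$ is invertible (which holds for generic $y$, since each $c_i \ne 0$), the reverse-order law gives $T_y^+ = (B^T)^+ D_y^{-1} A^+$, and hence
\begin{equation*}
    M = A\, D_x\, B^T (B^T)^+ D_y^{-1} A^+ = A\, \Lambda\, A^+, \qquad \Lambda = \diag\!\left(\frac{\langle c_i,x\rangle}{\langle c_i,y\rangle}\right)_{i=1}^r,
\end{equation*}
because $B^T (B^T)^+ = I_r$. Thus $M a_j = \lambda_j a_j$ with $\lambda_j = \langle c_j,x\rangle/\langle c_j,y\rangle$, so the columns $a_j$ are exactly the eigenvectors of $M$ associated with its nonzero eigenvalues, recoverable up to scale provided the $\lambda_j$ are pairwise distinct. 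Symmetrically, applying the construction to $T_x^T, T_y^T$ yields $M' = T_x^T (T_y^T)^+ = B\, \Lambda\, B^+$, whose eigenvectors are the $b_j$ with the \emph{same} eigenvalues $\lambda_j$; matching the two eigendecompositions through the common eigenvalue $\lambda_j$ pairs each $a_j$ with its partner $b_j$.

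I expect the main obstacle to be justifying that the eigenvalues $\lambda_j$ are distinct, which is exactly where the hypothesis that no pair $c_i, c_j$ is linearly dependent enters. The equality $\lambda_i = \lambda_j$ is equivalent to the bilinear condition $\langle c_i,x\rangle\langle c_j,y\rangle = \langle c_j,x\rangle\langle c_i,y\rangle$; viewed as a polynomial in $(x,y)$ this vanishes identically if and only if $c_i$ and $c_j$ are parallel. Since by assumption they are not, the bad set is a proper algebraic subvariety of measure zero, so a randomly (or generically) chosen pair $(x,y)$ simultaneously avoids all $\binom{r}{2}$ such conditions with probability one; this makes every $\lambda_j$ simple and both eigendecompositions well-defined.

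Finally I would recover the third factor and assemble the conclusion. Having fixed representatives $a_j, b_j$ up to scale, the rank-one matrices $a_j b_j^T$ are linearly independent, since $\{a_j\}$ and $\{b_j\}$ are each linearly independent (the tensor products of independent families are independent). Flattening $\mathcal{T}$ along the third mode writes each third-mode fiber as a linear combination of the $a_j \otimes b_j$, so solving one well-posed linear system returns the $c_j$. Uniqueness of the full decomposition up to the unavoidable permutation and scaling then follows: the eigenvectors are pinned down up to scale by the simple-eigenvalue eigendecompositions, the pairing is forced by matching eigenvalues, and the $c_j$ are determined by the linear solve. For the complexity bound, forming the slices, computing the pseudoinverses, and diagonalizing each cost $O(n^3)$, the tensor occupies $O(n^3)$ space, and solving for all $c_j$ across the $n$ fibers is the dominant $O(n^4)$-time step, yielding the stated $O(n^3)$ space and $O(n^4)$ time.
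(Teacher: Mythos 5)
Your proposal is the standard simultaneous-diagonalization proof of Jennrich's theorem (slices $T_x=AD_xB^T$, $T_y=AD_yB^T$, eigendecomposition of $T_xT_y^{+}=A\Lambda A^{+}$ with generically distinct eigenvalues guaranteed by the pairwise independence of the $c_i$, pairing via common eigenvalues, and a final linear solve for the $c_j$), and it is correct, including the $O(n^3)$ space and $O(n^4)$ time accounting. The paper does not reprove this result---it imports it directly from Harshman and the tensor-decomposition literature---so your argument is exactly the proof the citation stands in for.
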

\begin{remark}
Note that if all vectors $a_i$ are linearly independent, then the assumptions of Corollary~\ref{cor:fromKruskal} are satisfied.
\end{remark}

\begin{remark}
A similar problem for $t$-recovery (for weighted hypergraphs) arose in a completely different context~\cite{anari2018smoothed}. While in both papers the problem is reduced to recovering the minimum rank decomposition of a carefully constructed tensor, we give better recovery guarantees for this problem by using more recent uniqueness guarantees~\cite{lovitz2021generalization}.
\end{remark}

\section{Reconstruction of the probability distribution on $H$. Proofs for Section~\ref{sec:Ph}}\label{app:proof:Ph}

In this section we discuss how one may reconstruct the hidden probability distribution on $\prob(H)$ from 
\begin{itemize}
    \item the bipartite graph $\Gamma$, and 
    \item the function $\compproj:[K]\to[k_{1}]\times\cdots\times[k_{n}]$, and 
    \item the mixture weights (probabilities) $\{\wgt{X}{i} \mid i\in [k(\obs)] \} = \{\prob(Z=i)\mid i\in [k(\obs)]\}$
\end{itemize}

\subsection{A key lemma}

Below we formulate the key lemma that allows us to relate the structure present in the map $L$ with the causal structure in $G$.

Given a state $H=(h_1,\ldots,h_m)$ and its corresponding component $P(X\given H_1=h_1,\ldots,H_m=h_m)$, we want to identify the components $P(X\given H_1=h_1',H_2=h_2,\ldots,H_m=h_m)$ that result from changing just the first hidden variable while keeping every other hidden variable fixed. The next lemma says that we can identify such components by looking into the distribution of the observed variables that are not children of $H_1$.

\begin{lemma}\label{lem:hidden-directions}
Let $H_i$ be a hidden variable and let $\comp{{X\setminus \nbhd_{\Gamma}(H_i)}}{j}$ be an arbitrary mixture component observed in a marginal mixture distribution over the variables in $X\setminus \nbhd_{\Gamma}(H_i)$.  Let $C(j_1), C(j_2), \ldots C(j_t)$ be all the mixture components in the distribution of $X$ whose marginal distribution over $X\setminus \nbhd_{\Gamma}(H_i)$ is equal to $C({X\setminus \nbhd_{\Gamma}(H_i)}, j)$. In other words, $L(j_s)_i = j$ for all $s \in [t]$. Then $t = \dim(H_i)$ and every $C(j_s)$ for $s\in [t]$ corresponds to a distinct value of $H_i$.  
\end{lemma}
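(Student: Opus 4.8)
The plan is to translate the statement about mixture components into a statement about the hidden configurations that these components encode, and then read off everything from the nondegeneracy condition and the subset condition. Write $\nhbd = X \setminus \nbhd_{\Gamma}(H_i)$. Since every edge of $\Gamma$ points from $H$ to $X$, we have $\nbhd_{\Gamma}(H_i) = \ch(H_i)$, so $\nhbd$ is exactly the set of observed variables that are \emph{not} children of $H_i$. Recall from Section~\ref{sec:bg} that the full mixture over $X$ has $K = |\Hdom|$ components, each corresponding to a distinct configuration $h \in \Hdom$, so that the component indexed $\ell$ equals $P(X \mid H = h^{(\ell)})$. The first step is to observe that, by the Markov property, the marginal $P(\nhbd \mid H = h)$ depends only on the coordinates of $h$ indexed by $\pa(\nhbd) = \bigcup_{X_s \in \nhbd} \pa(X_s)$.

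The heart of the argument is to identify $\pa(\nhbd)$ exactly, and I expect this to be the main obstacle. First I would show $H_i \notin \pa(\nhbd)$: no variable in $\nhbd$ is a child of $H_i$ by construction, so $H_i$ is a parent of no element of $\nhbd$. Then, crucially, I would invoke the subset condition (Assumption~\ref{assum:ssc}) to show that \emph{every other} hidden variable lies in $\pa(\nhbd)$: for $j \neq i$, SSC gives $\nbhd_{\Gamma}(H_j) \not\subseteq \nbhd_{\Gamma}(H_i)$, so $H_j$ has a child in $X \setminus \ch(H_i) = \nhbd$, whence $H_j \in \pa(\nhbd)$. Combining these yields $\pa(\nhbd) = H \setminus \{H_i\}$. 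This identity is the delicate point: the naive intuition ``dropping the children of $H_i$ erases the information about $H_i$'' only justifies $H_i \notin \pa(\nhbd)$, but the lemma also requires that no \emph{other} hidden variable becomes invisible to $\nhbd$, and this is precisely what SSC buys. Indeed, without SSC a variable $H_j$ with $\nbhd_{\Gamma}(H_j) \subseteq \nbhd_{\Gamma}(H_i)$ would also drop out of $\pa(\nhbd)$, and the count $t = \dim(H_i)$ would fail.

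With $\pa(\nhbd) = H \setminus \{H_i\}$ in hand, the remainder is a direct application of Assumption~\ref{assump:strong}. By part~\ref{assump:strong:SDC}, two configurations induce the same marginal over $\nhbd$ if and only if they agree on all coordinates indexed by $\pa(\nhbd)$; since $\pa(\nhbd) = H \setminus \{H_i\}$, this occurs exactly when the two configurations differ only in the $H_i$-coordinate. Hence the full components $C(j_1), \ldots, C(j_t)$ whose marginal over $\nhbd$ equals $\comp{\nhbd}{j}$ are precisely those indexing configurations that agree on every hidden coordinate except possibly $H_i$. There are exactly $\dim(H_i)$ such configurations, and by part~\ref{assump:strong:NZ} each occurs with positive probability, so each is realized as a genuine (and, by the $K = |\Hdom|$-component structure of the full mixture, distinct) component. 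This delivers both $t = \dim(H_i)$ and the conclusion that $C(j_1), \ldots, C(j_t)$ correspond to distinct values of $H_i$, completing the proof.
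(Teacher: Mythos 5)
Your proof is correct and follows essentially the same route as the paper's: establish that $\nbhd_{\Gamma}(X\setminus\nbhd_{\Gamma}(H_i)) = H\setminus\{H_i\}$ via the subset condition, then apply Assumption~\ref{assump:strong}\ref{assump:strong:SDC} to conclude that two hidden configurations yield the same marginal over $X\setminus\nbhd_{\Gamma}(H_i)$ exactly when they differ only in $H_i$. The only difference is that you spell out the two inclusions behind the identity $\pa(\nhbd)=H\setminus\{H_i\}$, which the paper asserts in one line.
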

\begin{proof}
Observe that Assumption~\ref{assum:ssc} implies that $\nbhd_{\Gamma}(X\setminus \nbhd_{\Gamma}(H_i)) = H\setminus \{H_i\}$. Therefore, by  Assumption~\ref{assump:strong}\ref{assump:strong:SDC}, $p(X\setminus \nbhd_{\Gamma}(H_i) \mid H = h_1)\sim p(X\setminus\nbhd_{\Gamma}(H_i) \mid H = h_2)$, if and only if $h_1$ and $h_2$ differ only in the value of $H_i$. 
\end{proof}

\subsection{Proof of Theorem~\ref{thm:Ph}}

The algorithm described in the previous examples can be used to prove Theorem~\ref{thm:Ph}.
For this, we present a general algorithm to recover the correspondence $\Hdom \ni h\leftrightarrow i\in [K]$ using Lemma~\ref{lem:hidden-directions}.

\begin{proof}[Proof of Theorem~\ref{thm:Ph}]
Without loss of generality, we may assume that $H_i$ takes values from $\Omega_i = \{0, 1, \ldots, \dim(H_i)-1\}$ for every $i$.

Recall that the Hamming weight of a vector is the number of non-zero coordinates of this vector. Denote by $\Omega^{(t)}$ the set of elements of $\Omega = \Omega_1\times \Omega_2\times  \ldots \times\Omega_k$ of the Hamming weight at most $t$.  

We start by recovering the entries of the tensor that correspond to the indicies in $\Omega^{(1)}$.

Let us pick an arbitrary mixture component $C$ that participates in the observed mixture model and let us put it in correspondence to $h = (0, 0, \ldots 0)$.  We assign the probability of observing $C$ to the cell $\jointtable(0, 0,\ldots, 0)$. 

Take any $i\in [m]$. Consider the set of $d(H_i)$ mixture components $\{C_{i, a} \mid a\in \Omega_i\}$, guaranteed by Lemma~\ref{lem:hidden-directions}, that have the same distribution as $C$ in coordinates $X\setminus \ch(H_i)$ (here we take arbitrary indexing by $a$). Assign $C_{i, a}$ to the vector $h_{i, a}\in \Omega^{(1)}$ of Hamming weight 1, that has unique non-zero value $a$ in coordinate $i$. And let $\jointtable(h_{i, a})$ be the probability of observing $C_{i, a}$. 

Next, we claim that the (valid) correspondence $\Hdom \ni h\leftrightarrow i\in [K]$ for $h\in \Omega^{(t)}$ can be uniquely extended to the (valid) correspondence $\Hdom \ni h\leftrightarrow i\in [K]$ for $h\in \Omega^{(t+1)}$ for any $t = 1, \ldots, m-1$.

Indeed, let $h\in \Omega^{(t+1)}$ and let $i$ and $j$ be a pair of distinct non-zero coordinates of $h$. Let $h_i$ and $h_j$ be the vectors obtained by changing the $i$-th and $j$-th coordinates of $h$ to 0. Let $C_i$ and $C_j$ be the mixture components that correspond to $h_i$ and $h_j$.

Using Lemma~\ref{lem:hidden-directions}, for $s \in \{i, j\}$ we can find a set $M_u$ of $\dim(H_u)$ mixture components that are equally distributed with $C_s$ over $X\setminus \nbhd_{\Gamma}(H_s)$. We put into correspondence with $h$ the unique component in the intersection of $M_i$ and $M_j$. 
We define $\jointtable(h)$ to be the probability of observing this component.
\end{proof}

Next we show that our algorithm works in time that is almost linear in the output size (recall that $K\geq 2^m$ and $K$ is the size of the output).

\begin{observation}\label{obs:Phruntime}
The algorithm described in Theorem~\ref{thm:Ph} works in $O((nm+\max_{i}k_i)K)$ time.
\end{observation}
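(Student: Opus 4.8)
The plan is to charge the running time to two separate phases of the algorithm of Theorem~\ref{thm:Ph}: a one-time preprocessing phase that builds, for every hidden variable $H_i$, the partition of the $K$ global components induced by $\compproj$ on the coordinates $X\setminus\ch(H_i)$, and the main assignment phase that walks through the states of $\Hdom$ in order of increasing Hamming weight. By Lemma~\ref{lem:hidden-directions}, the block of this partition containing a component $\comp{X}{c}$ is exactly the set of components that differ from it in the value of $H_i$ only, so each block has size $\dim(H_i)$ and there are $K/\dim(H_i)$ of them. The key point is that these partitions are computed once and reused, rather than recomputed for each state.

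For the preprocessing phase, fix $i$. For each of the $K$ components $c$, reading off the tuple $\compproj(c)$ restricted to $X\setminus\ch(H_i)$ costs $O(n)$, so forming all $K$ signatures costs $O(nK)$, and grouping equal signatures by a radix sort over the at most $n$ coordinates (each taking values in $[k_i]$, where $k_i=\dim(\pa(X_i))\le\dim(\lat)=K$ by Observation~\ref{obs:counting}) costs $O(n(K+k_i))=O(nK)$. Summing over the $m$ hidden variables gives $O(nmK)$, and I would store, for each variable $H_i$, a map sending every component to an identifier of its block, so that the block containing a given component is retrievable in $O(1)$ time.

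For the assignment phase, the seed component is assigned to $(0,\ldots,0)$ and the $\sum_i(\dim(H_i)-1)$ states of Hamming weight one are read directly off the precomputed blocks, which is dominated by the preprocessing cost. For a state $h$ of Hamming weight at least two, I would select two distinct nonzero coordinates $i,j$, look up the already-assigned components $C_i,C_j$ for the weight-reduced states $h_i,h_j$, retrieve their blocks $M_i,M_j$ in $O(1)$, and compute $M_i\cap M_j$. As in the proof of Theorem~\ref{thm:Ph}, this intersection is the single component corresponding to $h$; computing it by inserting the smaller block into a hash set and scanning the larger costs $O(\dim(H_i)+\dim(H_j))$. Since every hidden variable is a parent of some observed $X_\ell$, we have $\dim(H_i)\le\dim(\pa(X_\ell))=k_\ell\le\max_i k_i$, so each intersection costs $O(\max_i k_i)$; as there are at most $K$ states of Hamming weight at least two, this phase costs $O(\max_i k_i\cdot K)$.

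Adding the two phases yields $O(nmK+\max_i k_i\cdot K)=O((nm+\max_i k_i)K)$, as claimed. The main obstacle is not any single estimate but the bookkeeping that keeps every per-state operation cheap: one must precompute the $m$ block-partitions so that locating the two relevant blocks is $O(1)$, and must intersect blocks in time linear (rather than quadratic) in their sizes, which is precisely where the hash-set implementation and the bound $\dim(H_i)\le\max_i k_i$ enter.
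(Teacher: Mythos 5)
Your proof is correct and follows essentially the same two-phase decomposition as the paper's: an $O(nmK)$ preprocessing pass that groups components by the value of $L$ on $X\setminus\ch(H_i)$ for each hidden variable, followed by filling each of the $K$ table cells via an intersection of two equivalence classes of size at most $\max_i k_i$. The extra implementation detail you supply (radix sort, hash-set intersection, and the explicit bound $\dim(H_i)\le k_\ell$ via a child $X_\ell$) only makes explicit what the paper leaves implicit.
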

\begin{proof}
First, the algorithm in  Theorem~\ref{thm:Ph} computes the equivalence classes of components that correspond to states of latent variables that differ just in the value of $H_j$. Having access to $\Gamma$ and $L$, computing these equivalence classes takes at most $O(nmK)$ time (for each of the $m$ hidden variables we need to compare vectors of values of $L$ of length $n$ for $K$ components).

Once these equivalence classes are computed, the algorithm in Theorem~\ref{thm:Ph} sequentially fills in the joint probability table. If the entries with indices of Hamming weight $t$ are filled in, in order to determine the value of a cell with an index of hamming weight $t+1$, we explore at most $2\max_{i\in [m]} k_i$ elements of the corresponding equivalence classes. Since eventually we explore all $K$ cells of the joint probability table, the total runtime of this phase is bounded by $O(\max_{i\in [m]} k_i)K$.
\end{proof}

\subsection{Non-identifiability if Assumption~\ref{assum:ssc} is violated}

Finally, we prove the impossibility claim in Remark~\ref{obs:ssc-assum-violated}.

\begin{proof}[Proof of Remark~\ref{obs:ssc-assum-violated}] We claim that if Assumption~\ref{assum:ssc} is violated, then $\prob(H)$ cannot be recovered and moreover $G$ is not identifiable. Consider a pair of models on Figure~\ref{example-ssc-violation}, where variables $H_1$ and $H_2$ are binary, i.e., they take values $\{0, 1\}$. Let $N_0, N_1, N_2, N_3$ and $N_0', N_1'$ be independent Gaussian distributions with distinct means and variances.

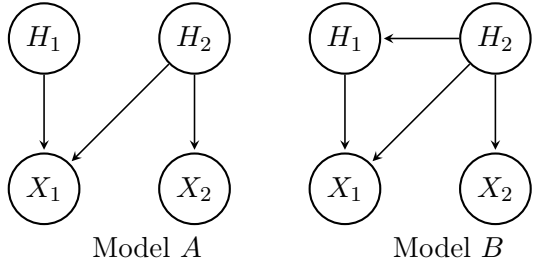
\begin{figure}[t]

\begin{center}
\begin{tikzpicture}[
            > = stealth, 
            shorten > = 1pt, 
            auto,
            node distance = 2cm, 
            semithick 
        ]

        \tikzstyle{every state}=[
            draw = black,
            thick,
            fill = white,
            minimum size = 4mm
        ]
        \node[state] (h1) {$H_1$};
        \node[state] (x1) [below of=h1] {$X_1$};
        \node[state] (h2) [right of=h1] {$H_2$};
        \node[state] (x2) [below  of=h2] {$X_2$};

        \path[->] (h1) edge (x1);
        \path[->] (h2) edge (x2);
        \path[->] (h2) edge (x1);
        
        \node [below right=.5cm, text width=8cm] at (x1)
        {
            Model $A$
        };

        \node[state] (hh1) [right of=h2]{$H_1$};
        \node[state] (xx1) [below of=hh1] {$X_1$};
        \node[state] (hh2) [right of=hh1] {$H_2$};
        \node[state] (xx2) [below  of=hh2] {$X_2$};
        \path[->] (hh2) edge (hh1);
        \path[->] (hh1) edge (xx1);
        \path[->] (hh2) edge (xx2);
        \path[->] (hh2) edge (xx1);
         \node [below right=.5cm, text width=8cm] at (xx1)
        {
            Model $B$
        };
\end{tikzpicture}
\end{center}
\caption{An example of the causal latent models that cannot be distinguished from observed data since Assumption~\ref{assum:ssc} is violated}\label{example-ssc-violation}
\end{figure}

Suppose that the observed distribution is equal to
\begin{equation}
    (X_1, X_2)\sim \dfrac{1}{9}N_0\otimes N_0'+\dfrac{2}{9}N_1\otimes N_1'+\dfrac{2}{9}N_2\otimes N_0'+\dfrac{4}{9}N_3\otimes N_1'
\end{equation}
Now we show that this distribution can be realized by both models A and B.
\begin{enumerate}
    \item \emph{Consistency with A}. Let $H_1, H_2$ be independent random variables that take values $\{0, 1\}$ with probabilities $(1/3, 2/3)$.
    \[ \begin{gathered}
    (X_1, X_2) \sim \sum\limits_{i\in \{0, 1\}}\sum\limits_{j\in \{0, 1\}}\prob(X_2|H_1 = i, H_2 = j)\prob(H_1 = i)\prob(H_2 = j), \text{ where} \\
    \prob(X_1|H_1 = 0, H_2 = 0) = N_0, \quad \prob(X_1|H_1 = 0, H_2 = 1) = N_1,\\
    \prob(X_1|H_1 = 1, H_2 = 0) = N_2, \quad \prob(X_1|H_1 = 1, H_2 = 1) = N_3,\\ 
    \prob(X_2|H_2 = 0) = N_0',\quad \prob(X_2|H_2 = 1) = N_1'
    \end{gathered}\]
    
    \item \emph{Consistency with B}. Let $H_1, H_2$ be binary random variables with the following distribution
    \begin{equation}
    \begin{gathered}
        \prob(H_2 = 0) = 1/3 \quad, \prob(H_1 = 0| H_2 = 0) = 1/3, \quad \prob(H_1 = 1| H_2 = 0) = 2/3,\\  
        \prob(H_2 = 0) = 2/3, \quad \prob(H_1 = 0| H_2 = 1) = 2/3,\quad  \prob(H_1 = 1| H_2 = 1) = 1/3
    \end{gathered}
    \end{equation}
    Define components of the mixture distribution to be
    \[ \begin{gathered}
    (X_1, X_2) \sim \sum\limits_{i\in \{0, 1\}}\sum\limits_{j\in \{0, 1\}}\prob(X_2|H_1 = i, H_2 = j)\prob(H_1 = i, H_2 = j), \text{ where} \\
    \prob(X_1|H_1 = 0, H_2 = 0) = N_0, \quad \prob(X_1|H_1 = 0, H_2 = 1) = N_3,\\
    \prob(X_1|H_1 = 1, H_2 = 0) = N_2, \quad \prob(X_1|H_1 = 1, H_2 = 1) = N_1,\\ 
    \prob(X_2|H_2 = 0) = N_0',\quad \prob(X_2|H_2 = 1) = N_1'
    \end{gathered}\]
\end{enumerate}
Since both models $A$ and $B$ realize distribution $\prob(X)$, we get that $G$ and $\prob(H)$ are not identifiable. Observe that Assumption~\ref{assum:ssc} is not satisfied for both $A$ and $B$, while Assumptions~\ref{assm:twins},~\ref{assm:minimal} and~\ref{assump:strong} are satisfied for each of $A$ and $B$.
\end{proof}

\section{Proof of Theorem~\ref{thm:main}}\label{app:proof:main}

Finally, we collect our results into a proof of the main theorem.

\begin{proof}[Proof of Theorem~\ref{thm:main}]
Suppose that Assumptions~\ref{assm:twins},~\ref{assm:minimal} and~\ref{assump:strong} hold, then by Theorem~\ref{thm:hidden-bipartite}\ref{thm:hidden-bipartite:exp}, $\Gamma$ and $\dim(H_i)$, for all $i$, can be recovered from $\prob(X)$. If additionally, the columns of the $|X|\times |H|$ adjacency matrix $A$ are linearly independent, then by Theorem~\ref{thm:recovery:eff} (see Corollary~\ref{cor:fromKruskal}, Theorem~\ref{thm:jennrich} and Observation~\ref{obs:wsne-comp}), $\Gamma$ and $\dim(H_i)$, for all $i$, can be reconstructed efficiently in $O(n^4)$ time.

Now, suppose that Assumption~\ref{assum:ssc} holds. We can extract the map $L$ from the $\oracle$ (by taking appropriate projections of component distributions). Therefore, since we have $\Gamma$, $\dim(H_i)$, $\{\wgt{X}{i}\}_{i\in [K]}$ and $L$, by Theorem~\ref{thm:Ph} and Observation~\ref{obs:Phruntime}, we can reconstruct $\prob(H)$ efficiently.
\end{proof}

\section{Algorithms}\label{app:pipeline}

In this section we describe the full pipeline\footnote{The code used to run the experiments can be found at \href{https://github.com/30bohdan/latent-dag}{https://github.com/30bohdan/latent-dag}} for learning $\gr$ from samples of the observed data $\obs$. As input we receive a set of samples and as output we return an estimated causal graph $G$ and a joint probability distribution over $H$. The pipeline consists of the following blocks:

\begin{enumerate}[label=(Step \alph*)]
    \item \textbf{Learning number of components.} 
    Estimates the number of components for all subsets of observed variables of size at most 3.
    \begin{itemize}
        \item Input: Samples from the distribution $\prob(X)$
        \item Output: Estimated number of mixture components $\ncomp(S)$ in $\prob(S)$ for all $S\subseteq X$, $|S|\leq 3$.
    \end{itemize}
    \item \textbf{Reconstruction of the bipartite graph.} Implements the algorithm of Theorem~\ref{thm:recovery:eff} for learning the bipartite causal graph $\Gamma$.
    \begin{itemize}
        \item Input: The number of mixture components $\ncomp(S)$ in $\prob(S)$ for all $S\subseteq X$, $|S|\leq 3$.
        \item Output: Estimated bipartite graph $\Gamma$ and sizes of the domains of hidden variables $\dim(H_i)$.
    \end{itemize}
    \item \textbf{Learning the projection map $L$.} 
     \begin{itemize}
        \item Input: Samples from the distribution $\prob(X)$ and the numbers of components $k(X)$ and $k(X_i)$ for every $i\in [n]$.
        \item Output: Estimated projection map $L$.
    \end{itemize}
    \item \textbf{Learning the distribution $\prob(H)$.} In this step we implement the algorithm described in Theorem~\ref{thm:Ph}, see also Algorithm~\ref{algo: ph learning}.
    \begin{itemize}
        \item Input: $L$, $\Gamma$ and $\dim(H_i)$ for all $i\in[m]$ and weights $\wgt{X}{j}$ of $k(X)$ mixture components.
        \item Output: Estimated joint probability table of $\prob(H)$.
    \end{itemize}
    We take $L$, $\Gamma$ and $\dim(H_i)$ for all $i$ as an input and return the joint probability table for $\prob(H)$ as an output.
    \item \textbf{Learning latent DAG $\latentgr$.}  In this step we estimate the causal graph over latent variables.
    \begin{itemize}
        \item Input: The joint probability table of $\prob(H)$.
        \item Output: Estimated causal graph $\Lambda$ over $H$.
    \end{itemize}
\end{enumerate}

In this paper, we prove theoretical guarantees for Steps (b) and (d), which invoke the mixture oracle $\oracle{}$. Step (a) implements $\oracle$, and Steps (c) and (e) are intermediate steps of the pipeline. As long as the oracle is correct, Step (c) is guaranteed to output the correct graph. The correctness of Step (e) depends on the structure learning algorithm used.
A nice feature of our algorithm is its modularity, if a better algorithm is developed for one of the steps, it can be incorporated without influencing other parts.

Below we discuss various implementation details for these steps.

\paragraph{Details of Step (a):}
Our implementation of Step (a) uses the following strategy.
\begin{enumerate}
    \item We estimate the upper bound $k_{max}$ on the number of components involved in the mixtures of single variables (this can be done using the silhouette score).
    \item For every observed variable $X_i$ we train $K$-means clustering  with $k = k_{max}$. After this, we perform agglomerative clustering for every $t\in [2, k_{max}]$, and record the silhouette score for every $t$.
    We pick $5$ values of $t$ with the best silhouette score.
    \item We use the divisibility condition to compute the sets $S_{X_i, X_j}$ of possible numbers of components we expect to see over the pairs of variables $X_i, X_j$. We use the best 5 predictions from the previous step for every variable $X_i$ and include the candidate for the number of components into  $S_{X_i, X_j}$ if it is divisible by one of the top-5 candidates for $X_i$ and for $X_j$. This step is mainly needed for computational purposes in order to restrict the number of candidates for the number of components observed over the pairs of variables. 
    \item Next we learn the mixture of $k$ components for every $k\in S_{X_i, X_j}$ over the pairs $(X_i, X_j)$ of observed variables. Similarly as in 2., we train $K$-means for the largest candidate and perform agglomerative clustering after that. 
    \item We use divisibility and means voting (discussed in Sec.~\ref{sec:alg}) to decide the best number of components for the single variables and the pairs of variables. In order to do this we make the predicted numbers of components for a pair $X_i, (X_i, X_j)$ to vote for each other if they satisfy the divisibility or means projection condition. We count the vote with the weight proportional to the silhouette score of the predicted number of components. For every $X_i$, and every pair $(X_i, X_j)$, we take the component with the largest amount of votes as our best prediction. 
    \item We use means of the components predicted for pairs of variables $(X_i, X_j)$ to estimate the locations of the means for the triples of observed variables. Instead of using $K$-means with the fresh start we initialize it with predicted locations. This improves the running time. We use $K$-means and silhouette score to predict the number of components for the triples of observed variables.
\end{enumerate}

\paragraph{Details of Step (b):}
In this step we use Corollary~\ref{cor:weight-observed-hidden}, Eq.~\eqref{eq:3tensor-formula} and Lemma~\ref{lem:com-weight} to compute entries of the tensor $M_3$ using the output of Step (a). After this we apply Jennrich's algorithm to learn the components of the tensor. As discussed in Appendix~\ref{sec:bip-from-tensor-app} this is sufficient to reconstruct $\Gamma$ and $\dim(H_i)$. 
In case Jennrich's algorithm did not successfully execute due to numerical issues, alternating least squares (ALS) was used as a failsafe. In this case, the number of hidden variables $m$ was used as input.\footnote{This can easily be avoided by running ALS for multiple values of $m$ and choosing the best fit. Since this issue arose in only a minority of cases, we did not implement this feature.}

\paragraph{Details of Step (c):}
We use $\Gamma$ and $\dim(H_i)$ to compute the number of components we expect to observe in $\prob(X_i)$ for every observed variable $X_i$ and the number of components in the distribution $\prob(X)$ over the entire set of observed variables. After this we use $K$-means to learn the components in the mixture distribution over every variable $X_i$ and over the entire set of observed variables. For every $i$, and for every mixture component of $\prob(X)$, we project its mean into the subspace over which $X_i$ is defined. We use the closest in $L_2$ distance mean of the components in $\prob(X_i)$ as a prediction for the projected component.

\paragraph{Details of Step (d):}

We implement the algorithm described in Theorem~\ref{thm:Ph}. See Algorithm~\ref{algo: ph learning} for details.

\begin{algorithm}[!ht]
\DontPrintSemicolon
\KwIn{ \begin{itemize}
    \item A bijective map $L : [k(X)] \rightarrow [k(X_1)]\times [k(X_2)]\times \ldots \times [k(X_n)]$;
    \item A bipartite graph $\Gamma$ between $X$ and $H$
    \item Values $\dim(H_i)$ for $i \in H$.
    \item Values $\prob(Z = i)$ for $i \in [k(X)]$ (the probabilities of observing the mixture components)
\end{itemize} }
\KwOut{An $\dim(H_1)\times \ldots \times \dim(H_m)$ tensor such that $\jointtable\cong \prob(H)$}
\tcp{Phase 1: use Lemma~\ref{lem:hidden-directions} to compute the sets of components that correspond to a change in a single hidden variable}
arrows = \{\}\\
\For{$H_i \in H$}{
S = $X\setminus ne_{\Gamma}(H_i)$\\
\For{ $c_1, c_2\in [k(X)]$}{ 
        \If{ $(L(c_2)_S == L(c_1)_S) $ and $c_1\neq c_2$}{
        arrows[$H_i$][$c_1$].append($c_2$)
        }
}
}
\tcp{Phase 2: initialize $T$ "along the edges"}
$A(0, \ldots 0) = 0$,\quad  $T(0, \ldots 0) = \prob(Z = 0)$\\
\For{$H_i \in H$ and $t\in \dim(H_i)$}{
$A(0, \ldots, t, \ldots 0) = arrows[H_i][0][t]$ \tcp{Note that an order does not matter}
$\jointtable(0, \ldots, t, \ldots 0) = \prob(Z = arrows[H_i][0][t])$
}
\tcp{Phase 3: reconstruct all other entries of the tensor}
$r = 1$\\
\While{$r<m$}{
    \For{$ind \in \dim(H_1)\times \ldots \dim(H_r)$}{
        \For{$j = r+1, \ldots, m$ and $t \in \dim(H_{t})$}{
            Let $i$ be the smallest index at which $ind$ is non-zero.\\
            Let $ind'$ be an index obtained from $ind$ by changing $j$-th entry from 0 to $t$ \\
            Let $ind''$ be obtained from $ind'$ by changing $i$-th entry to $0$.\\ 
            Let $x$ be the unique entry in the intersection of arrows[$H_i$][$A(ind'')$] and arrows[$H_{t}$][$A(ind)$].\\
            $A(ind') = x$\\
            $\jointtable(ind') = \prob(Z = x)$
        }
    }
}
\Return{$T$}
\caption{Learning $\prob(H)$}
\label{algo: ph learning}
\end{algorithm}

\paragraph{Details of Step (e):}
Once we obtain the estimated joint probability table, we run the Fast Greedy Equivalence Search \cite{ramsey2017million} to learn the edges of the Latent graph $\lat$, where we used the Discrete BIC score. FGES returns a CPDAG by default, so some edges may be undirected. We accordingly report both the Structural Hamming Distance (SHD) and the Unoriented Correct Edges (UCE) as metrics for our experiments. We remark that this step may be improved by using other algorithms such as PC \cite{spirtes1991} or other scores, which is an interesting direction for future work.

\section{Experiment details}\label{sec: expt_details}

\paragraph{Data generation}
For each experiment, 
the data generation process was as follows:
\begin{itemize}
    \item $(m, n)$: Chosen from among $(1, 3), (2, 5), (3, 7), (3, 8), (4, 7), (4, 8)$ in the ratio $1 : 2 : 2 : 3 : 1 : 1$
    \item Domain sizes $|\Omega_i|$: Sampled from $\{2, 3, 4, 5, 6\}$. If $|\Omega| = |\Omega_1| \ldots |\Omega_m| > 50$, we skip the experiment.
    \item $\pr(\lat)$: Generated via the Markov property. For each variable $\lat_i$, conditioned on its parents $\lat_{\pa(i)}$, a discrete distribution supported on $\Omega_i$ is chosen as follows: For each element $i$ in $\Omega_i$, a random integer $c_i$ is picked from $[1, 4]$ and distribution picks $i$ with probability proportional to $c_i$. 
    \item $\Lambda$: Choose an arbitrary topological order uniformly at random and sample each directed edge independently with probability $0.6$.
    \item $\Gamma$: Sample each directed edge from $\lat$ to $\obs$ with probability $0.5$. Enforce assumption~\ref{assum:ssc} and linear independence of the columns $a_j$ of the adjacency matrix $A$.
    \item Components: We generate Gaussian components for every $X_i$ in $\RR^5$ with random means and covariances. We take the means of the components to be sampled uniformly at random from the unit sphere. We take random symmetric diagonally dominant covariance matrices with the largest eigenvalue being $0.01$. (Note that for ~50 points on a unit 5-dimensional sphere, we expect to observe a pair of points at distance of the same order of magnitude). 
    \item Samples: We generate samples from the mixture components generated on the previous step with probabilities defined by $\prob(H)$.
\end{itemize}

We do not enforce minimum probability sizes or cluster sizes. As a result, the data generating process is likely to generate models which are extremely difficult to learn (e.g. if a randomly generated probability is very small, a mixture component will have few samples, which makes learning difficult). As a result, some random configurations may fail. We ran a total of $724$ experiments; out of these, 
 $8.3\%$ failed in the oracle learning phase and
 another $8.8\%$ failed to produce a graph because of very high domain sizes or unfeasible $L$. In the cases when the Jennrich algorithm failed due to numerical issues, this was caught and replaced with ALS for practical purposes as described in Step (b) above. These errors are conveniently caught during runtime and can be attributed to either the data generation process or the finite sample size as described above. \cref{fig:shd} reports the metrics for the remaining $600$ experiments: $300$ experiments each for $N = 10000$ samples and $N = 15000$ samples. 
 The experiments were run on a single node of an internal cluster.

\camadd{\paragraph{Experiments with smaller sample size.} The number of samples in the experiments discussed above is chosen so that every cluster component has approximately $20$ samples. We also explored the behaviour of our algorithms when the number of samples is much smaller. We ran a total of 136 experiments for $N = 1000$ samples, with $(m, n)$ chosen from $(1, 3), (2, 5), (3, 7), (4, 7), (3, 8)$ in proportion 1 : 2 : 1 : 1 : 1. Out of these, $4.4\%$ failed in the oracle learning phase and
 another $8.8\%$ failed to produce a graph because of very high domain sizes or unfeasible $L$. Furthermore, out of all failures, $25\%$ happen for $(m, n) = (4, 7)$ and  other $37.5\%$ happen for $(m, n) = (3, 8)$. We report the metrics on \cref{fig:shd-2}.
 
 \begin{figure}[t]
    \centering
    \includegraphics[width=0.99\textwidth, height=\textheight, keepaspectratio]{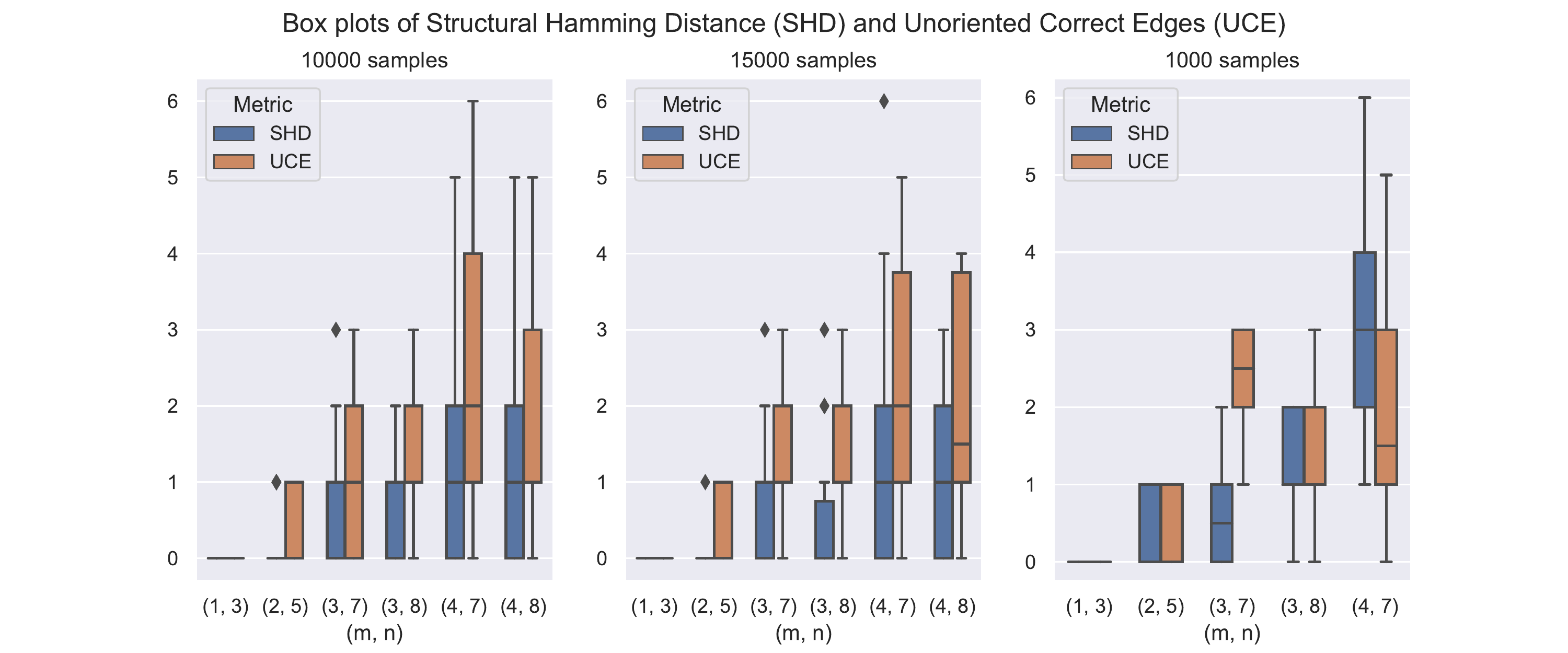}
    \caption{Average Structural Hamming distance for recovery of $\gr$, where $m=|\lat|$ and $n=|\obs|$.}
    \label{fig:shd-2}
\end{figure}
 
 We mention, that with $N = 1000$ samples, we were able to recover $H$ and $\Omega$ even in the cases when several latent states had fewer than five observations. Also, for comparison, to give an example where we were not able to recover $H$ and $\Omega$ exactly: the mixture model had 48 components with 1, 2, 2, 3, 3, 5, 5, 5, 6 …, 53, 55  samples per component. This is clearly an extremely challenging setup: Some states had only a few observations and the true number of components is unknown to the procedure.
}

\paragraph{Choice of parameters for learning $\Lambda$.} 
Once we have recovered the estimated joint probability table of $H$, to learn $\Lambda$, we use the Fast Greedy Equivalence Search algorithm \cite{ramsey2017million} with the Discrete BIC score. We use the PyCausal library \cite{chirayu_kong_wongchokprasitti_2019_3592985}. We used the default parameters (no hyperparameter tuning) and in particular, we did not assume faithfulness.
\clearpage
\paragraph{Approximate Runtime}

The average runtimes for each experiment are in the following table.
\captionof{table}{Average runtime in seconds}
\begin{center}
  \renewcommand{\arraystretch}{1.2}
  \begin{tabular}{|c|cc|}
  \hline
  \textbf{(m, n)} & \textbf{10000 samples} & \textbf{15000 samples}\\
    \hline
    (1, 3) & 30.64 s & 53.06 s \\
    (2, 5) & 89.03 s & 148.81 s \\
    (3, 7) & 288.25 s & 385.27 s \\
    (3, 8) & 320.25 s & 616.86 s \\
    (4, 7) & 297.32 s & 400.04 s \\
    (4, 8) & 361.28 s & 604.14 s \\
    \hline
\end{tabular}
\end{center}

\paragraph{Average number of edges} For our experiments, the average total number of edges in $\Lambda, \Gamma$ (also known as NNZ of $G$) are in the following table.

\captionof{table}{Average number of edges for different settings}
\begin{center}
  \renewcommand{\arraystretch}{1.2}
  \begin{tabular}{|c|cc|}
  \hline
  \textbf{(m, n)} & \textbf{Number of Samples} & \textbf{Average number of edges in $G = (\Lambda, \Gamma)$}\\
    \hline
    (1, 3) & 10000 & 3.0 \\
(1, 3) & 15000 & 3.0 \\
(1, 3) & 1000 & 3.0 \\
(2, 5) & 10000 & 7.15 \\
(2, 5) & 15000 & 6.95 \\
(2, 5) & 1000 & 6.98 \\
(3, 7) & 10000 & 13.52 \\
(3, 7) & 15000 & 13.2 \\
(3, 7) & 1000 & 13.7 \\
(3, 8) & 10000 & 15.27 \\
(3, 8) & 15000 & 15.16 \\
(3, 8) & 1000 & 15.3 \\
(4, 7) & 10000 & 17.43 \\
(4, 7) & 15000 & 18.17 \\
(4, 7) & 1000 & 18.35 \\
(4, 8) & 10000 & 19.87 \\
(4, 8) & 15000 & 20.13 \\
    \hline
\end{tabular}
\end{center}

\paragraph{Scatter plots} The scatter plots for the Structural Hamming distance (SHD) versus the total number of edges $|E(G)|$ in $G$ and that of the unoriented correct edges (UCE) vs $|E(G)|$ is given in \cref{fig:scatter}.

\begin{figure}[!h]
    \centering
    \includegraphics[width=\textwidth, height=\textheight, keepaspectratio]
    {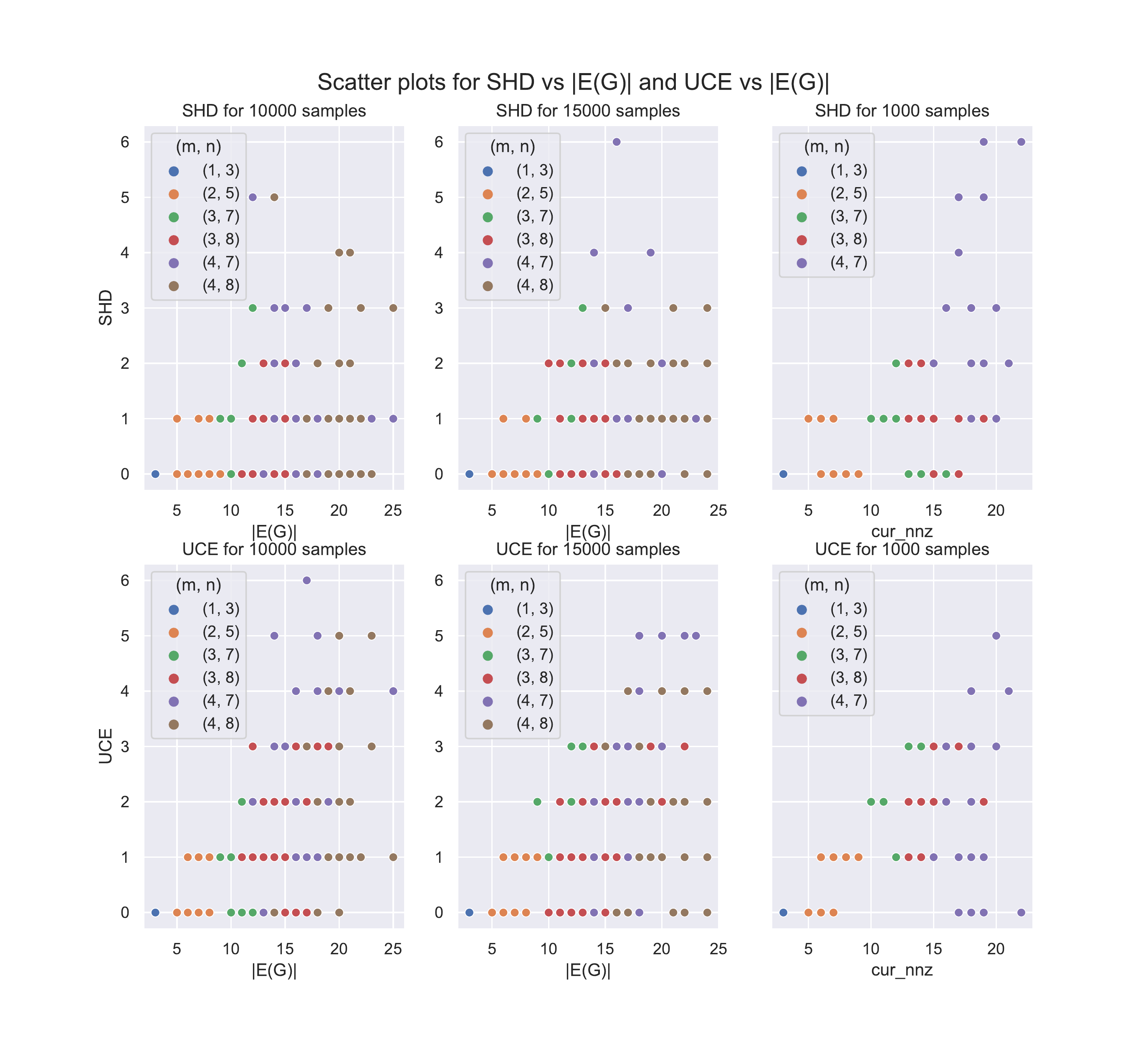}
    \caption{Scatterplots where $m=|\lat|$ and $n=|\obs|$.}
    \label{fig:scatter}
\end{figure}

\end{document}